\crefname{hypothesis}{Hypothesis}{Hypotheses}
\title{Stable Derivative Free Gaussian Mixture Variational Inference for Bayesian Inverse Problems
\thanks{Submitted to the editors DATE. The authors are in alphabetical order.
\funding{This work was funded by National Natural Science Foundation of China through grant 12471403 and the Fundamental Research Funds for the Central Universities of China.}}}
\author{
Baojun Che\footnotemark[4]~\thanks{School of Mathematical Sciences, Nankai University, Tianjin, China (\email{2111063@mail.nankai.edu.cn}).}
\and
Yifan Chen\thanks{Courant Institute, New York University, NY (\email{yifan.chen@nyu.edu})}
\and 
Zhenghao Huan\thanks{School of Mathematical Sciences, Peking University, Beijing, China (\email{math\_hzh@stu.pku.edu.cn}, \email{wwj66285509350@stu.pku.edu.cn}).}
\and
Daniel Zhengyu Huang\thanks{Corresponding author. Beijing International Center for Mathematical Research,  Center for Machine Learning Research, Peking University, Beijing, China (\email{huangdz@bicmr.pku.edu.cn}).}
\and
Weijie Wang\footnotemark[4]
}
\newcommand*{\addFileDependency}[1]{
  \typeout{(#1)}
  \@addtofilelist{#1}
  \IfFileExists{#1}{}{\typeout{No file #1.}}
}
\newcommand{\N}{\mathcal{N}}
\newcommand{\G}{\mathcal{G}}
\newcommand{\F}{\mathcal{F}}
\newcommand{\E}{\mathbb{E}}
\newcommand{\I}{I}
\newcommand{\R}{\mathbb{R}}
\newcommand{\T}{\mathcal{T}}
\newcommand{\rhoa}{a}
\newcommand{\fM}{\mathfrak{M}}
\DeclareMathOperator*{\argmin}{arg\,min}
\newcommand{\dd}{\mathrm{d}}
\newenvironment{newremark}[1]{%
    \begin{remark}#1}{%
    \Endofdef\end{remark}%
}
\newcommand{\xqed}[1]{%
    \leavevmode\unskip\penalty9999 \hbox{}\nobreak\hfill
    \quad\hbox{\ensuremath{#1}}}
\newcommand{\Endofdef}{\xqed{\lozenge}}
\definecolor{darkred}{rgb}{.6,0,0}
\definecolor{darkblue}{rgb}{0,0,.7}
\definecolor{darkgreen}{rgb}{0,.7,0}
\definecolor{darkbrown}{rgb}{0.8,0.4,0.4}
\begin{document}

\maketitle
\begin{abstract}
This paper is concerned with the approximation of probability distributions known up to normalization constants, with a focus on Bayesian inference for large-scale inverse problems in scientific computing. In this context, key challenges include costly repeated evaluations of forward models, multimodality, and inaccessible gradients for the forward model. To address them, we develop a variational inference framework that combines Fisher-Rao natural gradient with specialized quadrature rules to enable derivative free updates of Gaussian mixture variational families. The resulting method, termed Derivative Free Gaussian Mixture Variational Inference (DF-GMVI), guarantees covariance positivity and affine invariance, offering a stable and efficient framework for approximating complex posterior distributions. The effectiveness of DF-GMVI is demonstrated through numerical experiments on challenging scenarios, including distributions with multiple modes, infinitely many modes, and curved modes in spaces with up to 100 dimensions. The method's practicality is further demonstrated in a large-scale application, where it successfully recovers the initial conditions of the Navier-Stokes equations from solution data at positive times.
\end{abstract}

\begin{keywords}
Bayesian Inverse Problems, Variational Inference, Derivative Free Methods, Multimodal, Gaussian Mixtures.
\end{keywords}

\begin{AMS}
  62F15, 65M32, 90C56
\end{AMS}

\section{Introduction}
Sampling a target probability distribution known up to a normalization constant is a classical problem in scientific computing.
Specifically, in Bayesian inverse problems \cite{kaipio2006statistical,stuart2010inverse}, the goal is to recover an unknown parameter 
$\theta \in \R^{N_{\theta}}$ from noisy observation $y \in \R^{N_y}$, 
governed by the equation
\begin{equation}
\label{eq:IPs}
    y = \G(\theta) + \eta.
\end{equation}
Here, $\G$ represents a forward map which, for the problems we consider, is nonlinear and requires solving a partial differential equation (PDE) for each evaluation. The observation noise $\eta$ follows a Gaussian distribution: $\eta \sim \N(0,\Sigma_{\eta})$. Within the Bayesian framework, we assign a Gaussian prior $\N(r_0, \Sigma_0)$ to the unknown parameter $\theta$, resulting in a posterior distribution from which we aim to draw samples
\begin{align}
\label{eq:PhiR}
    \rho_{\rm post}(\theta) & \propto  \exp(-\Phi_R(\theta)),\ 
\Phi_R(\theta)  = \frac{1}{2}\lVert\Sigma_{\eta}^{-\frac{1}{2}}(y - \G(\theta)) \rVert^2+\frac{1}{2}\lVert\Sigma_{0}^{-\frac{1}{2}}(\theta - r_0) \rVert^2.
\end{align}
We can also express $\Phi_R$ through a nonlinear least-squares form, with an augmented map $\F(\theta)$ satisfying
\begin{align}
\Phi_R(\theta) = \frac{1}{2} \F(\theta)^T\F(\theta),\ \ 
\label{eq:Ftheta}
   \F(\theta) =
\begin{bmatrix}
    \Sigma_\eta^{-\frac{1}{2}}\bigl(y- \G(\theta)\bigr)\\
    \Sigma_0^{-\frac{1}{2}}\bigl(r_0 - \theta\bigr)
\end{bmatrix}.
\end{align}

\subsection{Challenges} For many Bayesian inverse problems in scientific applications, computing gradients of $\Phi_R$ might be infeasible, as it requires derivatives of large-scale PDE-based models $\G$ that may be black-box (e.g., climate models~\cite{sen2013global,yatunin2025climate,guo2024ib}), use non-differentiable numerical methods (e.g., embedded boundary methods~\cite{peskin1977numerical,huang2018family,huang2020modeling,cao2022bayesian} and adaptive mesh refinement~\cite{berger1989local,borker2019mesh}), or model discontinuous physics (e.g., in fracture~\cite{moes1999finite} or cloud modeling~\cite{tan2018extended,lopez2022training}). Derivative free methods are thus needed; see a review of methodologies in \cref{sec-review-derivative-free}. 

While there are a few Markov Chain Monte Carlo (MCMC) and Sequential Monte Carlo (SMC) approaches that do not require gradients, they often require numerous function evaluations, particularly in high dimensions, to achieve convergence or mitigate weight collapse. This makes them impractical in many large scale inverse problems given the high computational cost of evaluating $\mathcal{G}$. The computational cost can be reduced by (online) constructing surrogate models \cite{gao2024adaptive,li2023surrogate,yan2019adaptive,cleary2020calibrate,yan2021stein} and/or employing multifidelity strategies~\cite{giles2015multilevel,nagel2016unified,alsup2023context} for the forward model $\mathcal{G}$, but additional sources of error might be introduced.
Furthermore, the potential multimodality of $\rho_{\rm post}(\theta)$ can cause MCMC methods to struggle with mode transitions~\cite{gayrard2004metastability,gayrard2005metastability} and complicates surrogate model training. Missing modes can lead to significant prediction errors in scientific applications~\cite{tebaldi2005quantifying}.

Variational inference offers a promising scalable alternative, with black-box variational inference (BBVI)~\cite{ranganath2014black} being a popular derivative free approach. However, BBVI typically relies on stochastic Monte Carlo approximations to estimate gradients, which exhibit high variance. This high variance can make BBVI unreliable, often necessitating variance reduction techniques and extensive time step tuning. Even with these improvements, small time steps remain necessary to maintain stability; see a demonstration in \cref{ssec:Multi-Dimensional-Problems}.

On the other hand, Kalman methodology has also been used to develop derivative free variational inference methods \cite{chen2024efficient,huang2022efficient} based on approximations inspired by ensemble Kalman filters. While these approaches can be effective for posterior distributions close to Gaussian or Gaussian mixtures with well-separated components, they often become inaccurate or even unstable in the presence of non-Gaussian or strongly curved modes; see the experiments reported in \cite[Appendix E]{chen2024efficient} and detailed discussion in \cref{sec-Numerical challenges for solving the gradient flow}.

Motivated by these challenges, in this work, we explore guidelines for designing stable derivative free Gaussian mixture variational inference methods. We take the perspective that both BBVI \cite{ranganath2014black} and the Kalman methodology-based approaches \cite{chen2024efficient,huang2022efficient} are numerically approximating a gradient flow equation. In particular, with natural gradients, the flow equation becomes the Fisher-Rao gradient flow. Our guidelines address stable numerical approximation of the Fisher-Rao gradient flow. Building on the guidelines, we propose a novel approach that requires no hyperparameter tuning while achieving both high accuracy and computational efficiency.

\subsection{Contributions}
Specifically, our contributions are as follows:
\begin{enumerate}

    \item  

   We introduce practical guidelines for numerically solving the Fisher-Rao gradient flow within the Gaussian mixture variational family. These guidelines emphasize the consistent integration of entropy and cross-entropy terms to ensure accuracy, and the maintenance of covariance matrix positivity as a critical indicator of numerical stability. They provide a foundation for enhancing the robustness of existing Gaussian mixture variational inference methods and developing new algorithms.

    \item 

    Building on these guidelines, we propose a derivative free variational inference approach (DF-GMVI), which employs novel quadrature rules to approximate expectations involving Gaussian mixtures and $\Phi_R$. These quadrature rules go beyond linear approximations by capturing curvature information of the augmented map $\F$, while maintaining linear computational cost with respect to the dimensionality of $\theta$ and requiring no derivatives. DF-GMVI ensures \textit{covariance positivity} and exhibits \textit{affine invariance}, both of which contribute to its superior stability, even with large time steps.

    \item We demonstrate that DF-GMVI effectively captures multiple or even infinite modes, as well as curved modes, in model problems involving up to 100 dimensions. It also performs well in PDE-based applications, such as reconstructing the Navier-Stokes initial condition from solution data at positive times.
\end{enumerate}

\subsection{Literature Review} 
\label{ssec:relatedwork}


\subsubsection{Variational Inference} 

The main idea of variational inference is to approximate a target density $\rho_{\rm post}$ within a variational family of densities $Q$ from the view of optimization, that is, to identify the member of this family that minimizes an energy function $\mathcal{E}$:
$$\rho=\argmin_{\rho \in Q}\mathcal{E}(\rho,\rho_{\rm post}),$$
  where the minimizer coincides with $\rho_{\rm post}$.
The energy function is often chosen as the Kullback-Leibler divergence \eqref{eq:KL}. This optimization problem is then addressed via gradient flows and their discretizations. 
Along the gradient flow of the energy function, $\frac{\dd \rho}{\dd t} = - \nabla_{M} \mathcal{E}(\rho,\rho_{\rm post})$, with respect to a selected metric $M$, the energy gradually decreases.

In practice, we can classify variational inference methods into two categories: non-parametric and parametric approaches. In non-parametric variational inference, 
the variational distribution is often represented by a particle system. Gradient flows with various metrics \cite{chen2023sampling} have been used in this context, including the Wasserstein gradient flow \cite{jordan1998variational,carrillo2019blob,lambert2022variational}, Fisher-Rao gradient flow \cite{maurais2024sampling, DomingoEnrich2023AnEE,carrillo2024fisher,zhu2024kernel}, Wasserstein-Fisher-Rao gradient flow \cite{lu2023birth}, Kalman-Wasserstein gradient flow \cite{garbuno2020interacting} and Stein gradient flow \cite{liu2017stein,yan2021stein,jia2022stein}.
The particle system evolves according to the gradient flows and gradually approximates the target distribution. The convergence rate and accuracy depend on the number of particles used to represent the distribution and the quality of their representation.

In parametric variational inference,  the variational distribution is a parametric density. A common choice is the Gaussian distribution, which leads to Gaussian variational inference~\cite{opper2009variational,lambert2022variational,blei2017variational}.
To account for the geometric structure of parameter space, the gradient is preconditioned by the Fisher information matrix, resulting in the Fisher-Rao \textit{natural gradient}~\cite{amari1998natural,opper2009variational} methods. This approach has been shown to outperform standard gradient descent \cite{james2020new}. Researchers have also explored Gaussian approximations for Stein gradient flows \cite{liu2024towards} and Wasserstein gradient flows \cite{lambert2022variational,diao2023forward}.
However, the Gaussian approximation offers limited expressive power, which has motivated the use of more flexible variational families, such as Gaussian mixtures. These include Gaussian mixture approximations of the natural gradient flow \cite{lin2019fast,chen2024efficient} and Wasserstein gradient flow \cite{lambert2022variational,huix2024theoretical}.
Specifically, the present work focuses on Gaussian mixture approximation of the natural gradient flow, emphasizing stable and derivative free approximation methods that enhance computational efficiency.

\subsubsection{Derivative Free Sampling Approaches}

\label{sec-review-derivative-free}

A large class of derivative free sampling approaches is based on Markov chain Monte Carlo (MCMC) \cite{geyer1992practical, gelman1997weak}, where a derivative free proposal is used to move particles. The main challenges of these methods are the absence of stopping criteria and slow convergence, which worsen as the dimensionality of the problem increases. Several improvements have been proposed to address these issues, such as enhancing the proposal distribution by preserving affine invariance (e.g., the stretch move method \cite{goodman2010ensemble}) and leveraging parallelization (e.g., with multiple chains \cite{foreman2013emcee,braak2006markov, vrugt2009accelerating}). This work also incorporates these two strategies.

Another class of methods is Sequential Monte Carlo (SMC) \cite{doucet2009tutorial, smith2013sequential, beskos2015sequential,lu2025sequential}, where particles and their associated weights are updated together using importance sampling. The importance weights, especially in high dimensions, can suffer from collapse issues , and techniques such as frequent resampling are needed to address particle degeneracy, sample impoverishment, and instability \cite{elfring2021particle}.

Finally, in variational inference, a major derivative free approach is black-box variational inference \cite{ranganath2014black}, which employs stochastic approximation via Monte Carlo methods to estimate gradients. However, the high variance of these gradient estimates often results in unstable updates and slow convergence. To address these challenges, researchers have proposed strategies such as incorporating variance reduction techniques and adaptive learning rates to stabilize gradient estimates \cite{ranganath2014black, welandawe2024framework}.
Alternatively, gradient-based variational inference methods, such as Kalman-Wasserstein gradient flow \cite{garbuno2020interacting} and Stein gradient flow \cite{han2018stein}, avoid direct gradient computation by leveraging Stein's lemma: \[\E_{\N(\theta; m, C)} [\nabla_{\theta} \F(\theta)] = C^{-1} {\rm Cov}_{\N(\theta; m, C)}[\F(\theta), \theta].\] This approach enables gradient estimation through quadrature rules derived from Kalman filtering techniques, such as the ensemble Kalman filter~\cite{evensen1994sequential}, the unscented Kalman filter \cite{julier1997new,julier2000new,huang2022iterated}, and the cubature Kalman filter \cite{arasaratnam2009cubature}. 
Notably, the latter two achieve exactness for linear $\F$.
Despite these advancements, these approaches still require small time steps to mitigate instability, especially when the posterior is high-dimensional, exhibits multimodality, or involves complex dependencies.
The present work advances quadrature rules for estimating gradients and Hessians, which are crucial for developing stable, derivative free variational inference methods.

\subsection{Organization}
\label{ssec:over}
In \cref{sec:ngvi}, we provide an overview of natural gradient variational inference, with a focus on Gaussian mixture variational families. 
\Cref{sec:guidelines} outlines practical guidelines for numerically solving natural gradient flow.
\Cref{sec:df-gmvi} introduces our Derivative Free Gaussian Mixture Variational Inference (DF-GMVI), and the related theoretical insights are presented in \cref{sec:theory}. Numerical experiments are described in \cref{sec:numerics}, which serve to empirically validate the theory and demonstrate the effectiveness of the proposed framework for Bayesian inference. Finally, concluding remarks are provided in \cref{sec:conclusion}.

\section{Variational Inference with Natural Gradient}
\label{sec:ngvi}
 In this section, we briefly review variational inference with natural gradient methods, from the perspective of gradient flows. We focus on Gaussian mixture variational families. 
 
 In detail, we approximate the posterior distribution \cref{eq:PhiR} by minimizing the Kullback–Leibler (KL) divergence~\cite{wainwright2008graphical,blei2017variational}
\begin{equation}
\label{eq:KL}
    {\rm KL}[\rho_a \Vert \rho_{\rm post}] = \int \rho_a \log\Bigl(\frac{\rho_a}{\rho_{\rm post}}\Bigr)\dd\theta
\end{equation}
over a family of variational  densities $\rho_a$, parameterized by $a\in \R^{N_a}$. In Gaussian mixture variational inference, the variational family is a $K$-component Gaussian mixture \[\rho_a^{\rm GM}(\theta) = \sum_{k=1}^{K} w_k \N(\theta; m_k, C_k),\] parameterized by means $m_k\in\R^{N_\theta}$, covariances $C_k\in\R^{N_\theta \times N_\theta}$ and weights $w_k \in \mathbb{R}_{\geq 0}$, collectively denoted by the parameter vector \[a:=[m_1, ..., m_k, ..., m_K, C_1, ..., C_k, ..., C_K, w_1, ..., w_k, ..., w_K].\] 
Weights satisfy $\sum_{k=1}^{K} w_k = 1$.

The minimization of \eqref{eq:KL} can be done by using gradient flows. The natural gradient method~\cite{amari1998natural} employs the Fisher information matrix~\cite{rao1945information} as a preconditioner:
\begin{equation*}
    \fM(\rhoa) = {\rm FIM}(a):= \int  \frac{\partial \log \rho_{\rhoa}(\theta)}{\partial \rhoa} \frac{\partial \log \rho_{\rhoa}(\theta)}{\partial \rhoa}^T \rho_\rhoa(\theta) \dd \theta.
\end{equation*}
This leads to the following gradient flow for updating $a$:
\begin{align}\label{eq:GGVI}
    \frac{\dd a}{\dd t} = -\fM(\rhoa)^{-1}\nabla_a  {\rm KL}[\rho_a \Vert \rho_{\rm post}].
\end{align}
This equation can also be understood as a Gaussian mixture approximation of the Fisher-Rao gradient flow in the space of probability densities \cite{chen2023sampling}.

\subsection{Approximate Fisher Information} For the Gaussian mixture variational family, the Fisher information matrix lacks a close-form  expression, and its inversion is computationally challenging. To address this, following~\cite[Appendix C.8]{chen2024efficient}\cite{lin2019fast}, we approximate the Fisher information matrix with a block-diagonal form. Substituting this approximation into \cref{eq:GGVI} leads to the following equations:
\begin{equation}
\begin{split}
    \label{eq:Appr-FR-GM}
        \frac{\dd m_{k}}{\dd t} 
        &= -C_k\int \N_k(\theta) \Bigl( \nabla_{\theta} \log\rho_a^{\rm GM}  +  \nabla_{\theta} \Phi_R \Bigr)  \dd\theta,
        \\
        \frac{\dd  C_{k}^{-1}}{\dd t} 
        &= \int \N_k(\theta)\bigl(\nabla_{\theta}\nabla_{\theta}\log \rho_a^{\rm GM}  + \nabla_{\theta}\nabla_{\theta}\Phi_R\bigr) \dd\theta,
        \\
        \frac{\dd \log w_{k}}{\dd t} &= -\int \Bigl(\N_k(\theta) -  \rho_a^{\rm GM}\Bigr)\bigl(\log \rho_a^{\rm GM}  + \Phi_R \bigr) \dd\theta. 
\end{split}
\end{equation}
A detailed derivation of \cref{eq:Appr-FR-GM}, along with its connection to natural gradient descent for Gaussian variational families, is presented in the supplementary material.
 In the following, we refer to \cref{eq:Appr-FR-GM}  as the natural gradient flow, although it involves an approximation of the Fisher information matrix.  
 
 Notably, \cref{eq:Appr-FR-GM} is \textit{affine invariant}---its behavior remains unchanged under any invertible affine transformation, or equivalently, in coordinate systems related by such transformations (see \cref{prop:affine}). 
This invariance is particularly beneficial for improving the efficiency of sampling methods when handling highly anisotropic posterior distributions~\cite{goodman2010ensemble,foreman2013emcee,chen2023sampling}.

\subsection{Numerical Challenges for Solving the Gradient Flow}
\label{sec-Numerical challenges for solving the gradient flow}Solving \cref{eq:Appr-FR-GM} involves evaluating several Gaussian integrals on the right hand side. 
In the pioneering work~\cite{lin2019fast}, Monte Carlo integration was employed to compute these Gaussian integrals, which require both the gradient and Hessian of $\Phi_R$. However, to ensure numerical stability, very small time steps were necessary.
A subsequent study~\cite{chen2024efficient} proposed an efficient, derivative free approximation. In this approach, Monte Carlo integration was applied only to the Gaussian integrals associated with the logarithm of the Gaussian mixture, $\log\rho_a^{\rm GM}$, as these functions do not involve the evaluation of forward models, are relatively inexpensive to compute, and admit closed-form expressions. For the integrals involving $\Phi_R$, a Kalman-based approach, specifically the unscented transformation \cite{julier2000new,huang2022iterated}, was applied. 
This strategy preserved covariance positivity, enabling significantly larger time steps and resulting in an efficient, derivative free posterior approximation method known as Gaussian Mixture Kalman Inversion (GMKI). 

Although GMKI effectively captures multiple modes in the posterior, it struggles to accurately represent the density in regions where different modes significantly overlap (see \cite[Section 6.1, Appendix E]{chen2024efficient}). 
We identify this limitation as resulting from the inconsistent approximation of the two terms: $\log\rho_a^{\rm GM}$ and $\Phi_R$, which we refer to as the entropy and cross-entropy terms because of their origin in \eqref{eq:KL}.
Since the stationary point of \cref{eq:Appr-FR-GM} requires a balance between these terms, consistent treatment is important. This issue may be connected in spirit to the phenomenon pointed out by \cite{wibisono2018sampling}, where inconsistency in temporal discretization between the entropy and cross-entropy terms was argued to introduce asymptotic bias when discretizing Langevin dynamics in time.

In the present work, we focus on consistent spatial approximation of \cref{eq:Appr-FR-GM} through quadrature.
To do so, in the following section, we first consider some numerical examples to investigate the numerical issues in detail. These investigations motivate our practical guidelines for designing accurate and efficient solution schemes.

\section{Numerical Motivations and Practical Guidelines}
\label{sec:guidelines}
We discuss practical guidelines for the numerical solution of the natural gradient flow~\cref{eq:Appr-FR-GM}, specifically about the parametric approximation and Gaussian integrals. To this end, we consider illustrative 2D examples, which include a 3-mode Gaussian mixture target density, with 
$\Phi_R(\theta) = - \log\Bigl( \sum_{i=1}^3 w^{*}_i \mathcal{N}(\theta; m^{*}_i, C^{*}_i) \Bigr),$
where $w^{*}_1 = 0.3,\ m^{*}_1 = \begin{bmatrix}1 \\2\end{bmatrix},\ C^{*}_1 = I/4$,  
$w^{*}_2 = 0.4,\ m^{*}_2 = \begin{bmatrix}2 \\1\end{bmatrix},\ C^{*}_2= I/4$, and
$w^{*}_3 = 0.3,\ m^{*}_3 =\begin{bmatrix}-1 \\-1\end{bmatrix},\ C^{*}_3 = I/4$,
and a circular-shaped target density given by 
$\Phi_R(\theta) = \frac{(1 - \lVert \theta \rVert^2)^2}{2\sigma_\eta^2}$ with $\sigma_\eta=0.3$,
as described in \cite[Appendix E]{chen2024efficient}.

We use Gaussian mixtures with $K=10$ components as the parametric family and solve the natural gradient flow \cref{eq:Appr-FR-GM} using the forward Euler scheme with a fixed time step size. 
For this investigation, we will assume access to both the gradient and Hessian of $\log\rho_a$ and $\Phi_R$.
The Gaussian integrals in~\cref{eq:Appr-FR-GM} are evaluated using various quadrature rules, including the \textit{mean-point approximation}: 
\begin{equation}
\label{eq:mean_point}
    \int \N(\theta; m,C) g(\theta)\dd\theta \approx \int \N(\theta; m,C) \Bigl( g(m) + \nabla g(m) (\theta - m) \Bigr)\dd\theta =  g(m),  
\end{equation}
where $g$ may be a scalar-, vector-, or matrix-valued function. This corresponds to the linear approximation used in the 
extend Kalman filter~\cite{jazwinski2007stochastic}.
Other approaches include the unscented transformation~\cite{julier1995new,julier2000new}, which employs a specialized quadrature rule with $2N_\theta + 1$ sigma points, and the Monte Carlo method~\cite{hoffman2013stochastic} with $J=20$ ensembles. 

We consider four combinations for the choices of quadratures: (1) all $\Phi_R$ and $\log\rho_a$ related terms are integrated with mean-point approximation,
(2) all terms are integrated with unscented transformation,
(3) all terms are integrated with Monte Carlo method,
and (4) $\Phi_R$ related terms are integrated with mean-point approximation, while $\log\rho_a$ related terms are integrated with the Monte Carlo method. We note that the fourth combination involves an inconsistent treatment of the integrals.

\begin{figure}[ht]
\centering
    \includegraphics[width=0.98\textwidth]{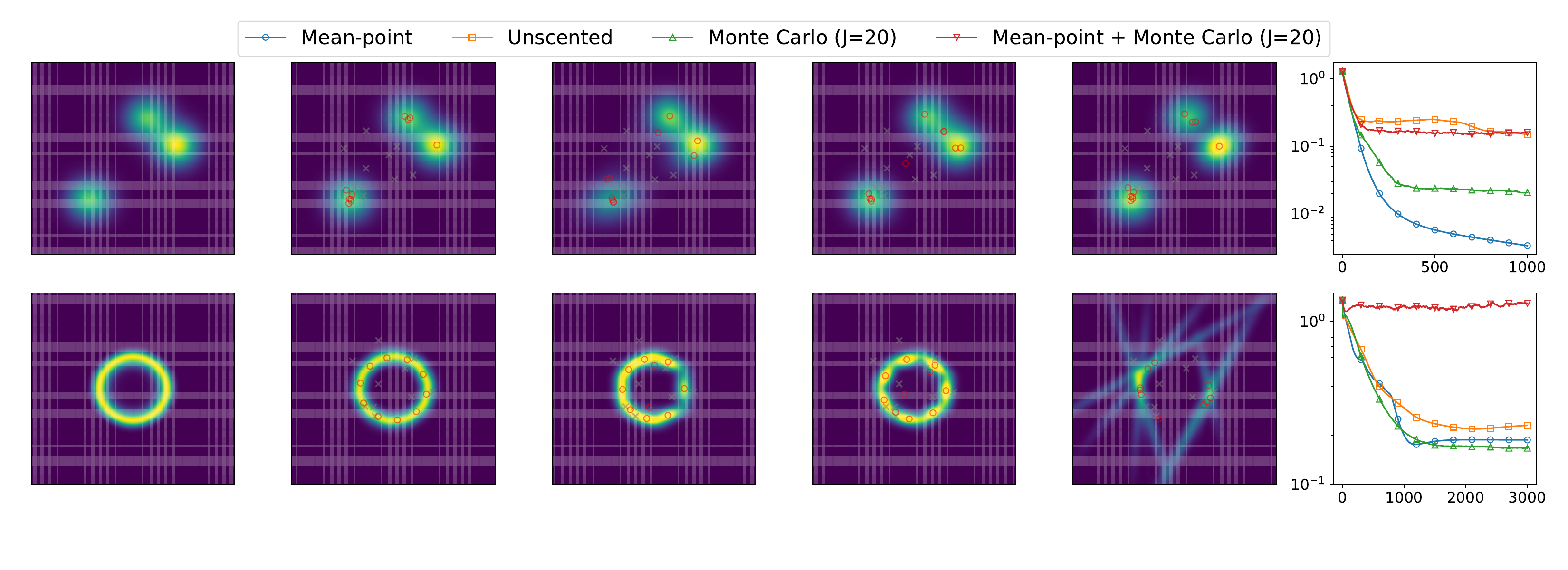}
    \caption{Quadrature rule comparison for Gaussian mixture and circular-shaped target densities.
    Each panel displays the reference density and the estimates from approximating the natural gradient flow~\cref{eq:Appr-FR-GM} with different quadrature rule combinations (left to right). Red circles represent the means $m_k$
  of each Gaussian component, with gray crosses marking their initial positions. The final panel shows the total variation distance between the reference density and the estimated densities at each iteration.}
    \label{fig:Quadrature-Rules}
\end{figure}

The results are presented in \cref{fig:Quadrature-Rules}. We highlight three observations. First, the fifth panels in both tests demonstrate the drawback of the inconsistent approximation of the integrals with respect to $\log\rho_a^{\rm GM}$ and $\Phi_R$. This combination fails to accurately approximate the delicate regions where different modes significantly overlap. In contrast, a consistent treatment leads to better performance. In fact, such consistent treatment ensures that replacing $\log\rho_a^{\rm GM}$ with the exact solution makes the right hand side of \cref{eq:Appr-FR-GM} (after the quadrature is applied) equal to zero exactly.

Second, we note that the maximum time steps for stable simulation in these two examples are $3\times10^{-2}$ and $6\times10^{-3}$, respectively. For larger time steps, the covariance matrix update loses positivity. For more challenging problems (e.g. Case D in \cref{ssec:Multi-Dimensional-Problems}), the maximum stable time step can drop to $10^{-7}$. The positivity of the covariance thus serves as an indicator of stable simulation.

Finally,  we observe that the mean-point approximation \eqref{eq:mean_point} is the most efficient, as it requires only one function evaluation to compute the integral and performs surprisingly well compared to other quadrature rules. 
When this approximation is applied, at the stationary point of the method, we have the following conditions:
\begin{equation}
\label{eq:mean_point_stationary}
\begin{aligned}
      \nabla_\theta\log\rho_a^{\rm GM}(m_k) + \nabla_\theta\Phi_R(m_k)  = 0,
      \\
    \nabla_\theta\nabla_\theta\log\rho_a^{\rm GM}(m_k) + \nabla_\theta\nabla_\theta\Phi_R(m_k)  = 0 ,
    \\ 
    \log\rho_a^{\rm GM}(m_k) +  \Phi_R(m_k)  = \text{const} . 
\end{aligned}
\end{equation}
These conditions are not the same as the exact one (i.e., $\log\rho_a^{\rm GM}(\theta) + \Phi_R(\theta) = \text{const}$) due to approximation error. However, these conditions at least guarantee that the exact condition is satisfied at these mean points $\{m_k\}$. In contrast, when other quadrature rules are used, the stationary conditions become linear combinations of $\rho_a^{\rm GM}$ and $\Phi_R$ that satisfy \cref{eq:mean_point_stationary} (e.g., the Monte Carlo quadrature rule imposes $\frac{1}{J}\sum_{j}\nabla_\theta\log\rho_a^{\rm GM}(\theta^{(j)}_k) + \frac{1}{J}\sum_{j}\nabla_\theta\Phi_R(\theta^{(j)}_k) = 0$, where $\theta^{(j)}_k\sim \mathcal{N}(m_k,C_k)$). This can lead to larger variance and might introduce cancellation errors, making it less indicative that the exact condition is satisfied at any point. The third and fourth panels in the second row of \cref{fig:Quadrature-Rules} show that the unscented and Monte Carlo approximations perform worse than the mean-point approximation.

These observations suggest the practical guidance of developing quadrature rules that ensure consistency and guarantee covariance positivity based on the mean-point approximation \eqref{eq:mean_point}.

\section{Derivative Free Gaussian Mixture Variational Inference}
\label{sec:df-gmvi}

In this section, we introduce a stable, derivative free approximation of~\cref{eq:Appr-FR-GM} for Bayesian inverse problems in~\cref{eq:PhiR,eq:Ftheta}, termed Derivative Free Gaussian Mixture Variational Inference (DF-GMVI). 
The DF-GMVI method employs two \textit{derivative free} quadrature rules, defined in \cref{def:DF-quadrature,def:GM-quadrature}, motivated by the guidelines outlined in \cref{sec:guidelines}. Specifically,
building on the consistent mean-point approximation \cref{eq:mean_point}, we incorporate additional corrections to approximate Hessian expectations, capturing as much curvature information of $\F$ as possible, while maintaining linear complexity in the evaluation of $\F$. These corrections ensure \textit{covariance positivity} and maintain \textit{affine invariance}, which contribute to the superior stability of the overall algorithm, as detailed in \cref{sec:theory}.  

We begin by deriving a specialized quadrature rule to compute expectations of the following forms:
\begin{align}
\label{eq:EPhi_R_terms}
   \E_{\N} \bigl[\Phi_R\bigr] , \, \E_{\N} \bigl[\nabla_\theta \Phi_R\bigr],\, \textrm{and} \,\E_{\N} \bigl[\nabla_\theta \nabla_\theta \Phi_R\bigr]
\end{align}
with respect to the Gaussian density $\N(\theta; m, C)$ in a derivative free manner. Here, we assume access only to $\F(\theta)$, and recall that $\Phi_R(\theta) = \frac{1}{2}\F(\theta)^T\F(\theta)$ has a nonlinear least-squares structure.
\begin{definition}
\label{def:DF-quadrature}
    Given $\theta\sim \N(\theta; m, C) \in \R^{N_{\theta}}$ and a hyperparameter $\alpha > 0$, we generate $2N_\theta+1$ quadrature points
    \begin{align*}
        \theta_0 = m ,\qquad \theta_i = m + \alpha [\sqrt{C}]_i ,\qquad \theta_{N_\theta + i} = m - \alpha [\sqrt{C}]_i \quad (1 \leq i \leq N_\theta),
    \end{align*}
    where $\sqrt{C}$ is the square root matrix of $C$, such that 
    $C = \sqrt{C} \sqrt{C}^T$ and $[\sqrt{C}]_i$ denotes its $i$-th column.  Given $\F : \R^{N_\theta} \rightarrow \R^{N_x}$, we compute vectors
    \begin{align}
    \label{eq:qr_vectors}
        c = \F(\theta_0), \
        b_i = \frac{\F(\theta_i) - \F(\theta_{N_\theta + i})}{2\alpha}, \
        a_i = \frac{\F(\theta_i) + \F(\theta_{N_\theta + i}) - 2\F(\theta_0)}{2\alpha^2} \ (1 \leq i \leq N_\theta),
    \end{align}
    and denote $B = [b_1;\,b_2;\,\cdots b_{N_\theta}] \in \R^{N_x \times N_\theta}$ and $A = [a_1;\,a_2;\,\cdots a_{N_\theta}] \in \R^{N_x \times N_\theta}$.
The expectation of the function is then approximated as 
 \begin{align}
 \label{eq:Function}
    \E_{\N}[\Phi_R ] = \frac{1}{2}\E_{\N}\Bigl[ \F(\theta)^T \F(\theta) \Bigr] \approx \frac{1}{2}  c^Tc .
\end{align}
The expectation of the gradient is approximated as 
 \begin{align}
 \label{eq:Gradient}
    \E_{\N}[\nabla_\theta   \Phi_R ] = \frac{1}{2}\E_{\N}\Bigl[\nabla_\theta \bigl(\F(\theta)^T \F(\theta)\bigr)\Bigr] \approx
      \sqrt{C}^{-T}  B^Tc.
\end{align} 
    The expectation of the Hessian is approximated as
    \begin{align}
    \label{eq:Hessian}
    \E_{\N}[\nabla_\theta\nabla_\theta  \Phi_R ] = \frac{1}{2}\E_{\N}\Bigl[\nabla_\theta\nabla_\theta \bigl(\F(\theta)^T \F(\theta)\bigl)\Bigr] \approx
    \sqrt{C}^{-T}(6\mathrm{Diag}(A^TA) + B^T B)\sqrt{C}^{-1},
\end{align}
where $\mathrm{Diag}(\cdot)$ extracts the diagonal elements of a matrix to form a diagonal matrix.
\end{definition}

To derive \cref{def:DF-quadrature}, we first define $\widetilde{\F}(\widetilde\theta) = \F(m + \sqrt{C}\widetilde{\theta})$.
The expectations of $\Phi_R(\theta)$ in \cref{eq:Function} and its gradient in \cref{eq:Gradient} are then approximated using the mean-point approximation with the finite difference approximation of the gradient $\nabla_{\widetilde\theta}   \widetilde\F(0)^T$, as given in~\cref{eq:qr_vectors}. 
The expectation of the Hessian in \cref{eq:Hessian} can be interpreted as a correction that incorporates curvature information into the mean-point approximation in the Gaussian Newton formulation, $\sqrt{C}^{-T}( B^T B)\sqrt{C}^{-1}$.
The derivation of the additional terms $\sqrt{C}^{-T}(6\mathrm{Diag}(A^TA))\sqrt{C}^{-1}$ proceeds as follows:
We assume
that $\widetilde\F(\widetilde \theta) = A \widetilde{\theta} \odot \widetilde{\theta} + B \widetilde{\theta} + c $, where $\theta\odot \theta$ denotes the Hadamard product.  The column vectors of matrices $A$ and $B$ are approximated by finite differences given in \cref{eq:qr_vectors}.
Under this assumption, the expectation of the Hessian has an analytical form:
\begin{equation*}
    \begin{split}
    \frac{1}{2}\E_{\N}\Bigl[\nabla_\theta\nabla_\theta \bigl(\F(\theta)^T \F(\theta)\bigl)\Bigr] 
    &=  \sqrt{C}^{-T} \E_{\widetilde{\N}(0,I)}[\nabla_{\widetilde\theta}\nabla_{\widetilde\theta}  \widetilde\F(\widetilde\theta)^T\widetilde\F(\widetilde\theta) + \nabla_{\widetilde\theta}  \widetilde\F(\widetilde\theta)^T \nabla_{\widetilde\theta}\widetilde\F(\widetilde\theta)] \sqrt{C}^{-1}   \\
    &= 
    \sqrt{C}^{-T}(D + 2\mathrm{Diag}(A^T c) + B^T B)\sqrt{C}^{-1},
    \end{split}
\end{equation*}
where $D$ is a diagonal matrix with 
$D_{jj} = 2\sum_{k}a_{j}^Ta_{k} + 4 a_{j}^Ta_{j}$. To ensure covariance positivity (see \cref{prop:positivity}),  we retain only the positive definite part of this expectation, resulting in  \cref{eq:Hessian}.

\begin{newremark}
As a comparison, BBVI~\cite{ranganath2014black} employs integration by parts to express the terms as follows:
\begin{subequations}
\begin{align} 
&\E_{\N}[\nabla_\theta \Phi_R(\theta)]    = C^{-1}\E_{\N}\bigl[(  \theta - m )\Phi_R\bigr], \label{eq:gradient-BBVI}\\
&\E_{\N}[\nabla_\theta\nabla_\theta \Phi_R(\theta)]    = C^{-1}\E_{\N}\Bigl[\bigl( (\theta - m)(\theta - m)^T - C\bigr)\Phi_R\Bigr] C^{-1}.\label{eq:hessian-BBVI}
\end{align}
\end{subequations}
BBVI applies Monte Carlo approximations of the above two expectations, which often lead to high variance.

On the other hand, applying the unscented transformation~\cite{julier1995new,julier2000new} for evaluating \cref{eq:gradient-BBVI} also gives a finite difference approximation of $\nabla_\theta\Phi_R(m)$ as in \Cref{def:DF-quadrature}. However, \cref{eq:hessian-BBVI} involves high-order matrix-valued functions, and directly applying unscented~\cite{julier1995new,julier2000new}, cubature~\cite{arasaratnam2009cubature} transformations, or Monte Carlo result in high-variance or non-positive estimates which are less accurate or stable numerically.  
\end{newremark}

Then, we discuss the quadrature rules for the term involving $\log \rho_a^{\rm GM}$. Note that the gradient and Hessian of $\log\rho_a$ are accessible and inexpensive to compute here. We develop quadrature rules to approximate
the expectation of $\log \rho_a^{\rm GM}$ and its derivatives used in \cref{eq:Appr-FR-GM}, following the guidelines discussed in \cref{sec:guidelines}, with particular emphasis on ensuring the consistency with \cref{def:DF-quadrature}.


\begin{definition}
\label{def:GM-quadrature}
Given Gaussian mixture 
$\rho_a^{\rm GM}(\theta) = \sum_{k=1}^{K} w_k \N(\theta; m_k, C_k),$
the expectation of $\log \rho_a^{\rm GM}(\theta)$ and its gradient with respect to its Gaussian component $\N_k(\theta) = \N(\theta; m_k, C_k)$ are approximated as 
\begin{equation}
\label{eq:log_rho_a}
\E_{\N_k}[\log \rho_a^{\rm GM}(\theta)] \approx  \log \rho_a^{\rm GM}(m_k) ,\qquad 
\E_{\N_k}[\nabla_\theta \log \rho_a^{\rm GM}(\theta)] \approx  \nabla_\theta \log \rho_a^{\rm GM}(m_k).
\end{equation}
The expectation of the Hessian of $\log \rho_a^{\rm GM}(\theta)$ with respect to its Gaussian component $\N_k$ is approximated as 
\begin{equation}
\label{eq:logrho-Hessian}
\begin{split}
&\E_{\N_k}\Bigl[\nabla_\theta \nabla_\theta \log \rho_a^{\rm GM}(\theta)\Bigr] 
\\
&=  -\E_{\N_k}\Bigl[\frac{\Bigl(\sum_i w_i v_i \N_i(\theta)\Bigr)\Bigl(\sum_i w_i v_i \N_i(\theta)\Bigr)^T}{ \rho_a^{\rm GM}(\theta)^2} 
+\frac{\sum_i w_i \Bigl(C_i^{-1}- v_iv_i^T\Bigr)\N_i(\theta)}{ \rho_a^{\rm GM}(\theta)}\Bigr]
\\
&=  \underbrace{\E_{\N_k}\Bigl[\frac{\sum_{i<j} w_iw_j \bigl(v_i - v_j\bigr)\bigl(v_i - v_j\bigr)^T\N_i(\theta)\N_j(\theta)}{ \rho_a^{\rm GM}(\theta)^2}\Bigr]}_{term \,\, 1}
\underbrace{-\E_{\N_k}\Bigl[\frac{\sum_i w_i   \N_i(\theta) C_i^{-1}}{ \rho_a^{\rm GM}(\theta)}\Bigr]}_{term \,\, 2}
\\
&\approx \frac{\sum_{i<j} w_iw_j \bigl(v_i(m_k) - v_j(m_k)\bigr)\bigl(v_i(m_k) - v_j(m_k)\bigr)^T\N_i(m_k)\N_j(m_k)}{ \rho_a^{\rm GM}(m_k)^2} - C_k^{-1}.
\end{split}
\end{equation}
Here we denote $v_i(\theta) = C_i^{-1}(\theta - m_i)$.
\end{definition}

The expectations of $\log \rho_a^{\rm GM}(\theta)$ and its gradient in \cref{eq:log_rho_a} are computed using the mean-point approximation \eqref{eq:mean_point_stationary}, which is consistent with the approximation used for $\Phi_R$ in \cref{def:DF-quadrature}.
For the Hessian approximation in \cref{eq:logrho-Hessian}, it is decomposed into two terms.
The first term is positive definite and approximated using the mean-point approximation.The second term is negative definite and may cause a loss of covariance positivity in the update, if we simply use a mean-point approximation. We consider the weighted average which leads to the following equality
    \begin{align*}
        -\sum_k w_k \E_{\N_k}\Bigl[\frac{\sum_i w_i   \N_i (\theta) C_i^{-1}}{ \rho_a^{\rm GM}(\theta)}\Bigr] = -\int \sum_i w_i   \N_i (\theta) C_i^{-1}\dd\theta = -\sum_k w_k C_k^{-1}.
    \end{align*}
This shows that the weighted average of the second term is the same as the weighted average of  $-C_k^{-1}$. Based on this fact, we approximate the second term as $-C_k^{-1}$, rather than using a direct mean-point approximation. This approximation can be understood as a correction. When $K=1$, this approximation becomes exact. 
Moreover, we will show later in \cref{prop:positivity} that this correction ensures covariance positivity, significantly enhancing the stability of the algorithm.

With the above quadrature rules, we update the covariances, means, and weights sequentially using a forward Euler scheme from time $t$ to time $t+\Delta t$ with time step $\Delta t$. For the mean update, we use the updated covariance 
$C_{k}(t+\Delta t)$ in a Gauss–Seidel manner on the right hand side.
\begin{subequations}
\label{eq:alg}
\begin{align}
C_{k}^{-1}(t+\Delta t) 
        &= C_{k}^{-1}(t)  + \Delta t ~ \mathrm{\mathsf{QR}}_{\N_k}\bigl\{\nabla_{\theta}\nabla_{\theta}\log \rho_a^{\rm GM}(t) + \nabla_{\theta}\nabla_{\theta}\Phi_R\bigr\}, \label{eq:alg-C}
        \\
        m_{k}(t+\Delta t) 
        &= m_{k}(t)  - \Delta t ~ C_k(t+\Delta t) ~ \mathrm{\mathsf{QR}}_{\N_k} \bigl\{\nabla_{\theta} \log\rho_a^{\rm GM}(t)  +  \nabla_{\theta} \Phi_R \bigr\},
        \label{eq:alg-m}
        \\
        \log w_{k}(t+\Delta t) &= \log w_{k}(t) - \Delta t ~ \mathrm{\mathsf{QR}}_{\N_k} \bigl\{\log \rho_a^{\rm GM}(t)  + \Phi_R \bigr\}. 
        \label{eq:alg-w}
\end{align}
\end{subequations}
Here, $\mathrm{\mathsf{QR}}$ denotes the quadrature rules as defined in \cref{def:DF-quadrature,def:GM-quadrature}, and $\N_k$ denotes the $k$-th Gaussian component at time $t$.
We then normalize  ${w_{k}(t+\Delta t)}_{k=1}^{K}$ and, for efficiency, set a lower bound of 
$w_{k}$ at a default value of $10^{-8}$ during normalization. The lower bound, combined with the logarithmic update rule for weights in \cref{eq:alg-w}, allows weights to recover exponentially from small values, which is advantageous for alleviating potential mode collapse.
The computational cost with respect to evaluating $\F$ is $(2N_\theta + 1) K N_t$. This is because evaluating the quadrature rule in \cref{def:DF-quadrature} requires computing $\F$ at $2N_\theta + 1$ quadrature points for each Gaussian mode, where $N_t$ represents the total number of time steps. It is worth mentioning the evaluation of $\F$ at the $(2N_\theta + 1) K$ quadrature points can be performed in an embarrassingly parallel manner.

\section{Theoretical Insight}
\label{sec:theory}
In this section, we present the theoretical insights underlying the proposed DF-GMVI method, including its \textit{covariance positivity preserving} and \textit{affine invariance} properties. These insights are also applied to the design of adaptive time-stepping schemes and select gradient flows, which significantly improve the performance of other Gaussian mixture variational inference methods (see \cref{ssec:Multi-Dimensional-Problems}).

A common issue with Gaussian mixture-based sampling methods is instability when using large time steps, due to the covariance matrix losing positive-definiteness, as discussed in \cref{sec:guidelines}. The following proposition establishes a condition on the time step $\Delta t$ that ensures the covariance matrix remains positive definite during the update process, which relies on our design of the quadrature rules used for approximating Hessian expectations in \cref{sec:df-gmvi}. 
The proposition guarantees that we can choose a sufficiently large  $\Delta t$ (on the order of 
$\mathcal{O}(1)$), enabling faster convergence without introducing numerical instability (see \cref{sec:numerics}).
\begin{proposition}
\label{prop:positivity}
For the DF-GMVI algorithm described in \cref{eq:alg}, if $0< \Delta t < 1$, then $C_k$ remains positive definite.
\end{proposition}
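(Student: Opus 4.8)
The plan is to analyze the covariance update \cref{eq:alg-C} and show that the added term $\Delta t \cdot \mathsf{QR}_{\N_k}\{\nabla_\theta\nabla_\theta\log\rho_a^{\rm GM} + \nabla_\theta\nabla_\theta\Phi_R\}$, when combined with $(1-\Delta t)$ times a positive definite matrix plus $\Delta t$ times the ``old'' $C_k^{-1}$, yields a positive definite result. The key observation is to decompose the right-hand side of \cref{eq:alg-C} so that the negative-definite contributions are exactly absorbed by a fraction of the old $C_k^{-1}(t)$. Concretely, using the quadrature rule for the $\log\rho_a^{\rm GM}$ Hessian from \cref{def:GM-quadrature}, the second term contributes precisely $-C_k^{-1}(t)$, and using \cref{def:DF-quadrature} the Hessian of $\Phi_R$ is approximated by $\sqrt{C}^{-T}(6\,\mathrm{Diag}(A^TA) + B^TB)\sqrt{C}^{-1}$, which is \emph{positive semidefinite} by construction (it was obtained precisely by discarding the indefinite/negative part). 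Similarly, ``term 1'' in \cref{eq:logrho-Hessian} is positive semidefinite.

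So the first step is to write
\[
C_k^{-1}(t+\Delta t) = C_k^{-1}(t) + \Delta t\bigl(S_k - C_k^{-1}(t)\bigr) = (1-\Delta t)\,C_k^{-1}(t) + \Delta t\, S_k,
\]
where $S_k := \mathsf{QR}_{\N_k}\{\text{term 1}\} + \mathsf{QR}_{\N_k}\{\nabla_\theta\nabla_\theta\Phi_R\}$ collects all the remaining pieces of the quadrature output after the $-C_k^{-1}(t)$ has been split off. The second step is to verify that $S_k \succeq 0$: this follows because term~1's mean-point approximation is a sum of rank-one terms $w_iw_j(v_i-v_j)(v_i-v_j)^T \N_i\N_j/(\rho^{\rm GM})^2 \succeq 0$, and the $\Phi_R$-Hessian quadrature $\sqrt{C_k}^{-T}(6\,\mathrm{Diag}(A^TA)+B^TB)\sqrt{C_k}^{-1} \succeq 0$ since $\mathrm{Diag}(A^TA)$ has nonnegative diagonal entries and $B^TB \succeq 0$. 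The third step is immediate: if $0 < \Delta t < 1$, then $(1-\Delta t) > 0$, so $(1-\Delta t)C_k^{-1}(t)$ is positive definite (by induction, $C_k^{-1}(t)$ is positive definite), and adding the positive semidefinite $\Delta t\, S_k$ keeps it positive definite; hence $C_k^{-1}(t+\Delta t) \succ 0$, so $C_k(t+\Delta t) \succ 0$.

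The main obstacle is bookkeeping rather than conceptual: I must carefully confirm that the quadrature rules in \cref{def:DF-quadrature,def:GM-quadrature}, as actually used in \cref{eq:alg-C}, really do produce exactly one clean copy of $-C_k^{-1}(t)$ (from the second term of the $\log\rho_a^{\rm GM}$ Hessian) with \emph{everything else} positive semidefinite --- in particular that the $\Phi_R$ term contributes nothing negative. This is guaranteed by the construction in \cref{sec:df-gmvi}, where the derivation explicitly states ``we retain only the positive definite part of this expectation'' for the $\Phi_R$ Hessian and ``we approximate the second term as $-C_k^{-1}$'' for the entropy Hessian; so the proof reduces to invoking these design choices and the elementary fact that a convex combination (with positive weights summing to at most one, plus a PSD remainder) of positive definite matrices is positive definite. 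A minor point to handle is the base case of the induction: the initial covariances $C_k(0)$ are assumed positive definite, so the inductive hypothesis holds at $t=0$.
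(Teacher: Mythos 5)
Your proposal is correct and follows essentially the same argument as the paper's proof: rewrite the update as $C_k^{-1}(t+\Delta t) = (1-\Delta t)C_k^{-1}(t) + \Delta t\,S_k$ by splitting off the $-C_k^{-1}(t)$ contribution from the entropy-Hessian quadrature, observe that the remaining terms (the $\sqrt{C_k}^{-T}(6\,\mathrm{Diag}(A^TA)+B^TB)\sqrt{C_k}^{-1}$ piece and the sum of rank-one terms from term~1) are positive semidefinite, and conclude positive definiteness for $0<\Delta t<1$. Your explicit mention of the inductive base case $C_k(0)\succ 0$ is a small clarification the paper leaves implicit, but the substance is identical.
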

\begin{proof}
Incorporating the quadrature rules from \cref{eq:Hessian} and \cref{eq:logrho-Hessian} into \cref{eq:alg}, the update rule for $C_k$ becomes 
    \begin{equation}
\begin{split}
C_{k}^{-1}(t+\Delta t) 
        =& C_{k}^{-1}(t)  + \Delta t ~ \mathrm{\mathsf{QR}}_{\N_k}\bigl\{\nabla_{\theta}\nabla_{\theta}\log \rho_a^{\rm GM}(t) + \nabla_{\theta}\nabla_{\theta}\Phi_R\bigr\}
        \\
        =& C_{k}^{-1}(t)  + \Delta t ~ \Bigl(
        \sqrt{C_k(t)}^{-T}(6\mathrm{Diag}(A_k^TA_k) + B_k^T B_k)\sqrt{C_k(t)}^{-1} - C_k^{-1}(t) 
        \\
        &+ \frac{\sum_{i<j} w_iw_j \bigl(v_i(m_k) - v_j(m_k)\bigr)\bigl(v_i(m_k) - v_j(m_k)\bigr)^T\N_i(m_k)\N_j(m_k)}{ \rho_a^{\rm GM}(m_k)^2}
        \Bigr)
        \\
        =& (1 - \Delta t)C_{k}^{-1}(t)  + \Delta t ~ \Bigl(
        \sqrt{C_k(t)}^{-T}(6\mathrm{Diag}(A_k^TA_k) + B_k^T B_k)\sqrt{C_k(t)}^{-1}
        \\
        &+ \frac{\sum_{i<j} w_iw_j \bigl(v_i(m_k) - v_j(m_k)\bigr)\bigl(v_i(m_k) - v_j(m_k)\bigr)^T\N_i(m_k)\N_j(m_k)}{ \rho_a^{\rm GM}(m_k)^2}
        \Bigr).
\end{split}
\end{equation}
Since the matrix within the brackets is always positive semi-definite and $(1 - \Delta t)C_{k}^{-1}(t)$ is positive definite, we conclude that $C_{k}^{-1}(t+\Delta t)$ is also positive definite. 
\end{proof}

Moreover, the following proposition establishes the affine invariance of the natural gradient flow \eqref{eq:Appr-FR-GM}, and a restricted affine invariance of the DF-GMVI method.

\begin{proposition} 
\label{prop:affine}
The sampling algorithm applied to the posterior distribution, which is proportional to $\mathrm{e}^{-\Phi_R(\theta)}$, generates Gaussian mixture approximations parameterized by $\{m_{k}(t), C_{k}(t), w_{k}(t)\}_{k=1}^{K}$. 
For the transformed posterior, with $\widetilde{\Phi}_R(\widetilde{\theta}) = \Phi_R(T^{-1}(\widetilde{\theta} - d))$, under an arbitrary invertible affine mapping \( \varphi : \theta \rightarrow \widetilde{\theta} = T\theta + d \), where $T \in \mathcal{T}$ is an invertible matrix and $d$ is a translation vector, it produces updated parameters $\{\widetilde{m}_{k}(t), \widetilde{C}_{k}(t), \widetilde{w}_{k}(t)\}_{k=1}^K$. The algorithm is $\mathcal{T}$-invariant if the following holds:
\begin{align*}
\widetilde{w}_{k}(t) = w_{k}(t), \quad \widetilde{m}_{k}(t) = T m_{k}(t)+ d, \quad \widetilde{C}_{k}(t) = T C_{k}(t) T^T.
\end{align*}
This implies:
\begin{enumerate}
    \item The natural gradient flow defined by \cref{eq:Appr-FR-GM} is affine invariant, i.e., $\mathcal{T}$ contains all invertible matrices.
    \item The DF-GMVI algorithm \cref{eq:alg} is $\mathcal{T}$-invariant, when using Cholesky decomposition to compute the square root matrix  $\sqrt{C}$ in \cref{def:DF-quadrature},  where $\mathcal{T}$ denotes the group of invertible lower triangular matrices.
\end{enumerate}

\end{proposition}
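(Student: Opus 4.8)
The plan is to prove the two claims in \cref{prop:affine} separately, since the first concerns the continuous flow \cref{eq:Appr-FR-GM} while the second concerns the discrete scheme \cref{eq:alg} together with the specific quadrature rules. In both cases the strategy is the same: assume that at time $t$ the parameters transform correctly, i.e. $\widetilde m_k(t) = Tm_k(t)+d$, $\widetilde C_k(t) = TC_k(t)T^T$, $\widetilde w_k(t)=w_k(t)$, and then verify that the right-hand sides of the update equations transform consistently, so that the same relations hold at time $t+\Delta t$ (or infinitesimally, for the flow). The base case at $t=0$ is immediate provided the initialization is taken consistently under $\varphi$.

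For claim (1), I would first record how the relevant objects transform under $\varphi$. Writing $\widetilde\rho_a^{\rm GM}(\widetilde\theta) = |\det T|^{-1}\rho_a^{\rm GM}(T^{-1}(\widetilde\theta-d))$ and $\widetilde\Phi_R(\widetilde\theta)=\Phi_R(\theta)$ with $\theta = T^{-1}(\widetilde\theta-d)$, the chain rule gives $\nabla_{\widetilde\theta}\widetilde\Phi_R = T^{-T}\nabla_\theta\Phi_R$ and $\nabla_{\widetilde\theta}\nabla_{\widetilde\theta}\widetilde\Phi_R = T^{-T}(\nabla_\theta\nabla_\theta\Phi_R)T^{-1}$; the same identities hold for $\log\rho_a^{\rm GM}$ since the extra constant $-\log|\det T|$ has zero gradient and Hessian. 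The Gaussian densities $\N_k$ transform by the same Jacobian factor, so in the integrals $\int \widetilde\N_k(\widetilde\theta)(\cdots)\dd\widetilde\theta$ the change of variables $\widetilde\theta = T\theta+d$ produces a $|\det T|$ that cancels, leaving $\int \N_k(\theta)(\cdots)\dd\theta$ with the transformed integrand. Then I would just plug these into \cref{eq:Appr-FR-GM}: the $C_k^{-1}$ update acquires a factor $T^{-T}(\cdots)T^{-1}$, matching $\widetilde C_k^{-1} = T^{-T}C_k^{-1}T^{-1}$; the $m_k$ update has $\widetilde C_k \cdot T^{-T}(\cdots) = TC_kT^T T^{-T}(\cdots) = T\cdot C_k(\cdots)$, matching $\dd\widetilde m_k/\dd t = T\,\dd m_k/\dd t$; and the $w_k$ update integrand $\log\widetilde\rho_a^{\rm GM}+\widetilde\Phi_R$ differs from $\log\rho_a^{\rm GM}+\Phi_R$ only by the additive constant $-\log|\det T|$, which integrates against $\N_k(\theta)-\rho_a^{\rm GM}(\theta)$ to zero, so $\dd\widetilde w_k/\dd t = \dd w_k/\dd t$. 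This closes the induction for the flow.

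For claim (2), the additional subtlety is that the quadrature rules in \cref{def:DF-quadrature,def:GM-quadrature} are not exactly affine invariant for an arbitrary square-root convention, because the quadrature points $\theta_i = m + \alpha[\sqrt C]_i$ depend on the choice of $\sqrt C$. The key observation is that if $\sqrt C$ is chosen by Cholesky (lower triangular with positive diagonal) and $T$ is lower triangular, then $TC T^T = (T\sqrt C)(T\sqrt C)^T$ and $T\sqrt C$ is again lower triangular with positive diagonal, hence equals $\sqrt{\widetilde C}$ by uniqueness of the Cholesky factor; this is precisely why $\T$ must be restricted to invertible lower triangular matrices. With $\sqrt{\widetilde C} = T\sqrt C$, the transformed quadrature points satisfy $\widetilde\theta_i = \widetilde m + \alpha[\sqrt{\widetilde C}]_i = Tm+d+\alpha T[\sqrt C]_i = \varphi(\theta_i)$, so evaluating $\widetilde\F$ at $\widetilde\theta_i$ gives $\widetilde\F(\widetilde\theta_i) = \F(T^{-1}(\widetilde\theta_i-d)) = \F(\theta_i)$. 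Consequently the vectors $c, b_i, a_i$ and the matrices $A,B$ built in \cref{eq:qr_vectors} are \emph{identical} for the original and transformed problems, and formulas \cref{eq:Function,eq:Gradient,eq:Hessian} then transform exactly as their exact-integral counterparts did in claim (1), because the only $T$-dependent prefactors are the explicit $\sqrt C^{-T}, \sqrt C^{-1}$, which become $\sqrt{\widetilde C}^{-T} = \sqrt C^{-T}T^{-1}$ etc. A parallel check handles \cref{def:GM-quadrature}: $v_i(m_k)$ transforms as $\widetilde v_i(\widetilde m_k) = T^{-T}v_i(m_k)$ (since $\widetilde C_i^{-1}(\widetilde m_k - \widetilde m_i) = T^{-T}C_i^{-1}T^{-1}\cdot T(m_k-m_i) = T^{-T}v_i(m_k)$), the Gaussian densities $\N_i(m_k)$ are invariant up to the common Jacobian factor that cancels in ratios, and $-C_k^{-1}$ maps to $-\widetilde C_k^{-1} = -T^{-T}C_k^{-1}T^{-1}$, so the whole Hessian approximation in \cref{eq:logrho-Hessian} acquires exactly the $T^{-T}(\cdots)T^{-1}$ conjugation needed. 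Finally I would feed these transformed quadrature outputs into the Gauss--Seidel updates \cref{eq:alg-C,eq:alg-m,eq:alg-w} and check, exactly as in claim (1), that $\widetilde C_k^{-1}(t+\Delta t) = T^{-T}C_k^{-1}(t+\Delta t)T^{-1}$, $\widetilde m_k(t+\Delta t) = Tm_k(t+\Delta t)+d$ (using that the already-updated $\widetilde C_k(t+\Delta t) = TC_k(t+\Delta t)T^T$ appears on the right-hand side), and $\widetilde w_k(t+\Delta t) = w_k(t+\Delta t)$ (the $-\log|\det T|$ constant is killed either by the normalization step or, at the quadrature level, cancels since $\mathrm{\mathsf{QR}}_{\N_k}\{\text{const}\}$ returns that constant for every mode and normalization removes it).

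The main obstacle is the Cholesky-uniqueness argument and making sure the restriction to lower-triangular $\T$ is used exactly where needed and nowhere more: one has to be careful that the matrix $T\sqrt C$ genuinely has positive diagonal entries (which requires $T$ to have positive diagonal, or else one must track sign conventions in the Cholesky factor), and that the Hadamard-product structure in the derivation of \cref{eq:Hessian} — which is coordinate-dependent — does not secretly require more than lower-triangularity. Since the $a_i$ vectors are defined purely through evaluations of $\F$ at the (affinely covariant) quadrature points and carry no intrinsic $T$-dependence of their own, this works out, but it is the step most prone to a hidden gap, so I would write it carefully. Everything else is bookkeeping with the chain rule and the change-of-variables Jacobian.
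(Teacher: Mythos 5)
Your proposal is correct and follows essentially the same route as the paper's proof: chain-rule/change-of-variables bookkeeping for the continuous flow, and for the discrete scheme the key observation that Cholesky gives $\sqrt{TCT^T}=T\sqrt{C}$ for lower triangular $T$, so the quadrature points are affinely covariant, $\F(\widetilde{\theta}_i)=\F(\theta_i)$, the matrices $A,B,c$ are unchanged, and the explicit $\sqrt{C}^{-T},\sqrt{C}^{-1}$ prefactors supply the required $T^{-T}(\cdot)T^{-1}$ conjugation, with the $-\log|\det T|$ constant absorbed by weight normalization. Your caveat about $T\sqrt{C}$ needing a positive diagonal for Cholesky uniqueness is a legitimate refinement the paper's own proof glosses over (strictly one should take $\mathcal{T}$ to be lower triangular with positive diagonal, or note that a sign change of a column of $\sqrt{C}$ leaves the quadrature rule invariant since the points come in $\pm$ pairs).
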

We provide a proof of this proposition in \cref{sec:proof}. The affine invariance is restricted because the Cholesky decomposition of the covariance is only invariant for the lower triangular matrix group. The affine invariance enables the DF-GMVI method and variational inference methods based on natural gradient flow \eqref{eq:Appr-FR-GM} to exhibit faster convergence than other gradient flows with Gaussian mixture approximations, as further demonstrated numerically in \cref{ssec:Multi-Dimensional-Problems}.

\section{Numerical Study}
\label{sec:numerics}
In this section, we present numerical studies on the proposed DF-GMVI method. We focus on posterior distributions of unknown parameters or fields arising in inverse problems that may exhibit multiple modes. Three types of model problems are considered:
\begin{enumerate}
    \item A one-dimensional bimodal problem from \cite{chen2024efficient}: we use this problem as a proof-of-concept example. Our result shows that the convergence rate remains unchanged no matter how overlapped the two modes are. Moreover, DF-GMVI outperforms the derivative free Gaussian mixture Kalman inversion presented in \cite{chen2024efficient}.
    \item Several multi-dimensional problems: we use these problems to demonstrate that DF-GMVI is robust with respect to the posterior featuring multiple modes, infinitely many modes, and modes with narrow and curved shapes. The convergence rate remains robust across different problem dimensionalities. 
    Additionally, we compare DF-GMVI with other state-of-the-art sampling approaches, including Gaussian mixture variational inference methods leveraging gradient or Hessian information, BBVI, and MCMC methods, to highlight its strengths. 
    Here for these comparable methods, we design adaptive time-stepping schemes, following our guideline in \cref{sec:guidelines} to ensure covariance positivity. This strategy also substantially enhances the robustness of these Gaussian mixture variational inference methods.
    \item Navier-Stokes problem: we consider the inverse problem of recovering the initial velocity field of a Navier-Stokes flow. The problem is structured to exhibit symmetry, resulting in two modes in the posterior distribution. We demonstrate that DF-GMVI effectively captures both modes, highlighting its potential for tackling multimodal problems in large-scale, high-dimensional applications.
\end{enumerate}
For all tests, we set $\Delta t = 0.5$ and run 200 iterations, resulting in a total time of $T = 100$. We set $\alpha = 10^{-3}$ in the derivative free quadrature rule in \cref{def:DF-quadrature}. A parameter sensitivity study is presented in \cref{sec:parameter}. Our code is available online:
\url{https://github.com/PKU-CMEGroup/InverseProblems.jl/tree/master/Derivative-Free-Variational-Inference}.

\subsection{One-Dimensional Problems}
\label{ssec:1d-bimodal}
In this subsection, we consider the 1D bimodal inverse problem from \cite{chen2024efficient}, associated with the forward model
$$y = \G(\theta) + \eta \quad \textrm { with } \quad y = 1, \quad \G(\theta) = \theta^2.$$
The prior is $\rho_{\rm prior} \sim \N(3, 2^2)$ and examine different noise levels:
\begin{align*}
\textrm{Case A:} \quad  \eta \sim \N(0, 0.2^2);  \\
\textrm{Case B:} \quad  \eta \sim \N(0, 0.5^2); \\
\textrm{Case C:} \quad  \eta \sim \N(0, 1.0^2); \\
\textrm{Case D:} \quad  \eta \sim \N(0, 2.0^2).
\end{align*}
Note that the overlap between these two modes increases as the noise level rises. In case A, the two modes are well separated, while in case D, they are nearly indistinguishable (see \cref{fig:1D-density-all}).

We apply DF-GMVI with $K = 2$, $10$, and $40$ modes, each initialized by randomly sampling from the prior distribution with uniform weights. The converged density estimations at the $200$th iteration are shown in \cref{fig:1D-density-all}, with each row corresponding to a case (A to D). The reference density is plotted in gray. For $K = 10$ and $40$, the estimated densities closely align with the reference. The fourth panel in each row displays the total variation error at each iteration, calculated as $\int |\rho - \rho_{\rm ref}| d\theta$. Although Gaussian mixture variational inference involves non-convex optimization with local minimizers, increasing $K$ generally improves performance. In contrast, the GMKI method proposed in \cite{chen2024efficient} fails to accurately capture the density in the modal overlap regions, similar to the failure caused by inconsistent approximation discussed in \cref{sec:guidelines}. Therefore, our approach outperforms that method while maintaining the same computational cost and derivative free property.

\begin{figure}[ht]
\centering
    \includegraphics[width=0.95\textwidth]{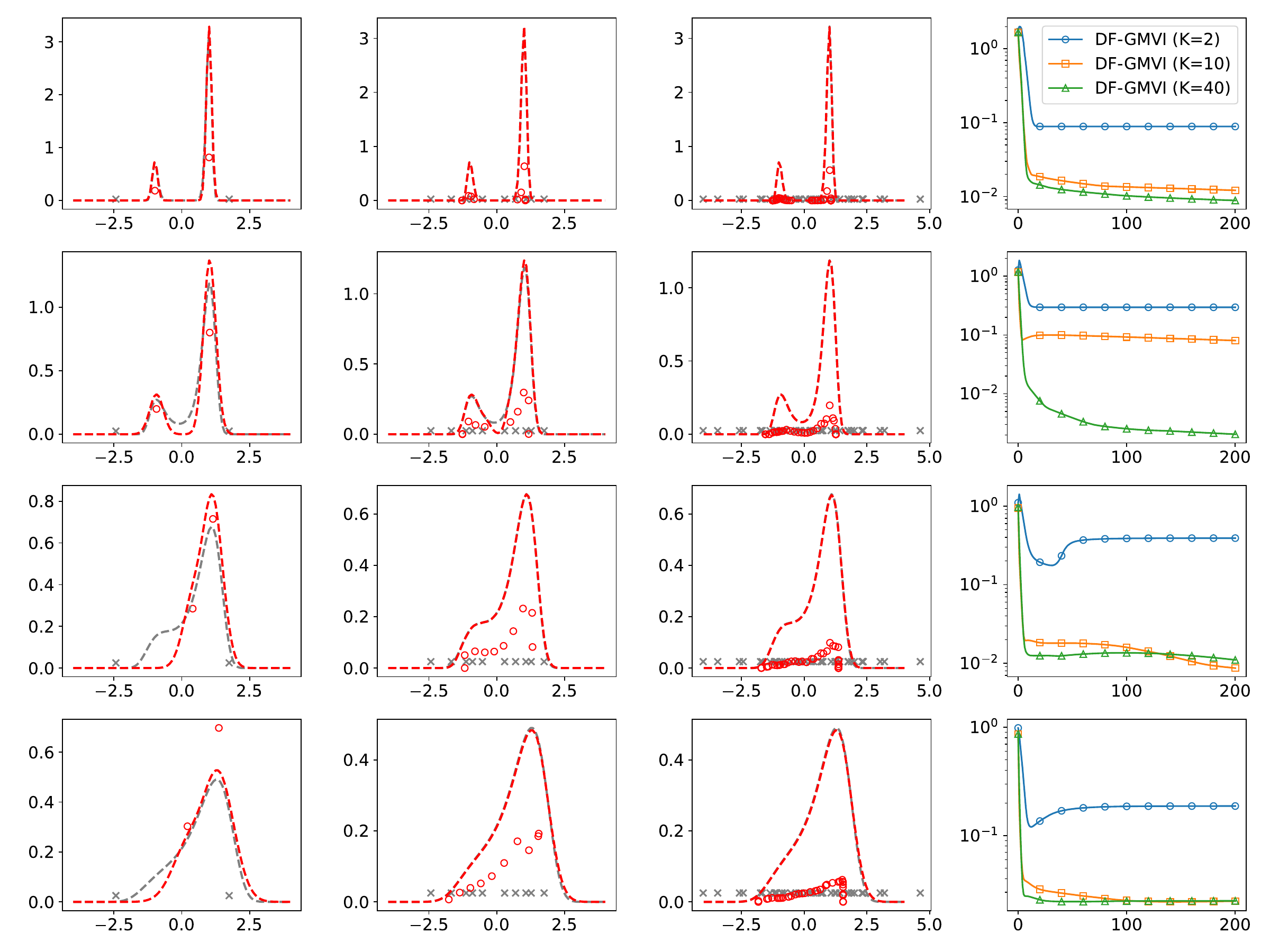}
    \caption{
    Results for the 1D bimodal problem with $\Sigma_{\eta}$ values of $0.2^2$~(top), $0.5^2$~(top middle), $1.0^2$~(bottom middle), and $1.5^2$~(bottom) is shown. Each panel displays the reference posterior distribution (gray dashed lines) alongside the posterior distributions estimated by DF-GMVI (red dashed lines) with mode numbers $K=2,\,10,\,40$~(from left to right).  The mean $m_k$ of each Gaussian component is marked by red circles, and the initial means are indicated by gray crosses, with their $y$-values corresponding to the respective weights (demonstrating no mode collapse). The fourth figure shows the total variation distance between the reference posterior distribution and the DF-GMVI estimated posteriors at each iteration.
    }
    \label{fig:1D-density-all}
\end{figure}

\subsection{Multi-Dimensional Problems}
\label{ssec:Multi-Dimensional-Problems}
In this subsection, we first investigate several 2D sampling problems, along with their 100-dimensional modified versions. We then conclude with a comprehensive comparison to other state-of-the-art sampling approaches. For the 2D sampling problems, we define $\Phi_R(\theta) = \frac{1}{2} \F(\theta)^T \F(\theta)$, where $\theta = [\theta_{(1)}, \theta_{(2)}]^T \in \mathbb{R}^2$, and $\F(\theta)$ is defined as follows.
\begin{enumerate}[label=Case \Alph*, left=0.2cm]
\item : The distribution is a Gaussian with
\begin{align*}
    \F(\theta) =   y - A\theta \quad A= 
\begin{bmatrix}
1 & 1\\
1 & 2
\end{bmatrix} ,\quad \textrm{ and }\quad 
y= 
\begin{bmatrix}
0\\
1
\end{bmatrix}. 
\end{align*}
\item : The distribution has four modes with different weights \cite[Appendix D]{chen2024efficient}.
\begin{align*}
\quad  \F(\theta) = y - \begin{bmatrix}
(\theta_{(1)} - \theta_{(2)})^2\\ 
(\theta_{(1)} + \theta_{(2)})^2\\ 
\theta_{(1)}\\ 
\theta_{(2)}
\end{bmatrix} 
\quad \textrm{ and }\quad 
y= 
\begin{bmatrix}
4.2297\\ 
4.2297\\ 
0.5\\ 
0.0
\end{bmatrix}.
\end{align*}
\item : The distribution is circular in shape \cite[Appendix E]{chen2024efficient} and contains infinitely many modes, as also used in \cref{sec:guidelines}, 
\begin{align*}
\F(\theta) = \frac{y - (\theta_{(1)}^2 + \theta_{(2)}^2)}{0.3}
\quad \textrm{ and }\quad 
\quad 
y = 1.
\end{align*}
\item : 
The distribution follows the Rosenbrock function, which has a characteristic ``banana'' shape~\cite{goodman2010ensemble}, with
\begin{align*}
\F(\theta) = 
\frac{1}{\sqrt{10}}\Bigl(y - \begin{bmatrix}
10(\theta_{(2)} -  \theta_{(1)}^2)\\
\theta_{(1)}
\end{bmatrix}\Bigr)
\quad \textrm{ and }\quad 
y= 
\begin{bmatrix}
0\\
1
\end{bmatrix}.
\end{align*}
\item : The distribution is an extension of the Rosenbrock function \cite{detommaso2018stein} and is characterized by its bimodal, ``banana'' shaped density.
\begin{align*}
\F(\theta) = 
y - \begin{bmatrix}
\log\bigl(
100(\theta_{(2)} -  \theta_{(1)}^2)^2 + (1 - \theta_{(1)})^2
\bigr)/0.3\\
\theta_{(1)}\\
\theta_{(2)}
\end{bmatrix}
\quad \textrm{ and }\quad  
y= 
\begin{bmatrix}
\log(101)\\
0\\
0
\end{bmatrix}. 
\end{align*}
\end{enumerate}

We apply DF-GMVI with $K = 10, 20$ and $40$ modes, each randomly initialized as $\N(0, I)$ with equal weights. The density estimations at the 200th iteration are shown in \cref{fig:2D-density-all}, with each row corresponding to a case (A to E). In each row, the reference density is displayed in the first panel, followed by the results from DF-GMVI with $K = 10, 20$ and $40$ modes. The estimated densities closely match the reference. The fifth panel in each row shows the error in terms of total variation over the iterations.  DF-GMVI effectively captures multiple modes (Cases B and E), densities characterized by narrow, curved valleys (Cases D and E), which typically pose challenges for sampling methods, as well as densities with infinitely many maximum a posteriori points (Case C). The fifth panel demonstrates that DF-GMVI converges efficiently among all cases.
However, it is important to note that when the posterior is Gaussian (Case A), DF-GMVI does not achieve exponential convergence due to the interactions among different Gaussian components; 
in contrast, Gaussian variational inference \cite{chen2023sampling} achieves exponential convergence in this setting. Furthermore, when the number of modes $K$ is small, or when the initial Gaussian components are distant from the target mode, DF-GMVI may fail to capture all modes of the posterior density. Increasing $K$ helps mitigate this issue and enhances performance.

\begin{figure}[h]
\centering
    \includegraphics[width=0.95\textwidth]{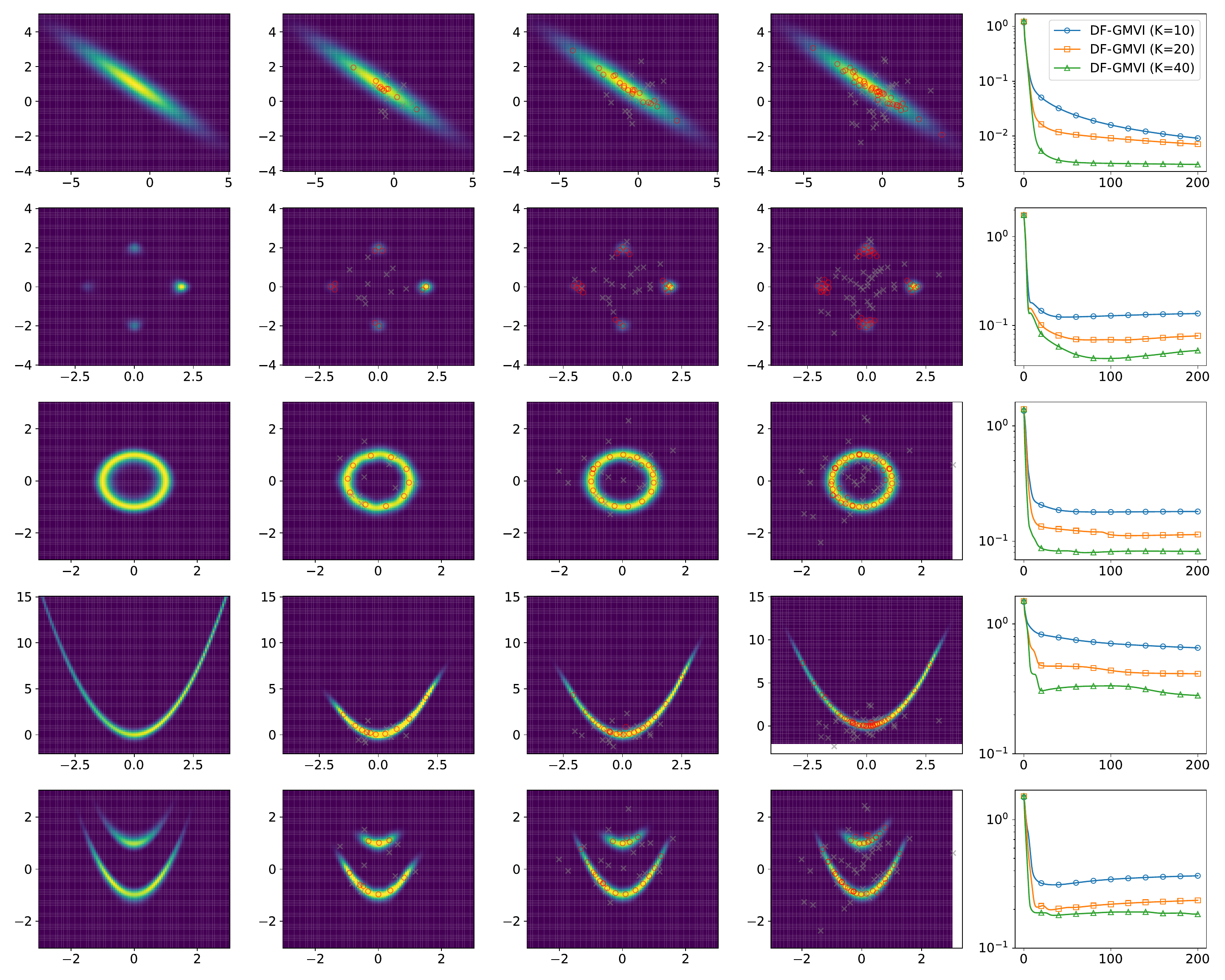}
    \caption{Multi-dimensional problems with dimensionality $2$, arranged from Case A to Case E from the top row to the bottom row. 
    Each panel shows the reference density alongside the densities estimated by DF-GMVI with $K=10, 20$ and $40$ (from left to right). The mean $m_k$ of each Gaussian component is marked by red circles, and the initial means are marked by gray crosses. The fifth panel shows the total variation distance between the reference density and the densities estimated by the DF-GMVI over the iterations.}
    \label{fig:2D-density-all}
\end{figure}

We then modify these sampling problems to a $100$-dimensional problem ($N_\theta = 100$) by introducing $N_\theta - 2$ additional variables, $\theta^c$. The reference density is defined with
\begin{equation*}
    \Phi_R(\theta,  \theta^c) = \frac{1}{2}\F(\theta)^T\F(\theta) + \frac{1}{2}(\theta^c - K\theta)^T (\theta^c - K\theta), 
\end{equation*}
where $\theta \in \mathbb{R}^2$, $\theta^c \in \mathbb{R}^{N_\theta - 2}$, and $K \in \mathbb{R}^{(N_\theta - 2) \times 2}$ is an all-ones matrix. The function $\F$ is defined as in the previous 2-dimensional densities. These high-dimensional densities are constructed so that the marginal densities of $\theta$ preserve computability and match exactly that of the previous 2-dimensional densities, allowing for exact error evaluation through 2D marginal densities to evaluate DF-GMVI.

Similarly, we apply DF-GMVI with $K = 10, 20$ and $40$ modes, each randomly initialized as $\N(0, I)$ with equal weights. The marginal densities obtained from the estimated 100-dimensional densities at the 200th iteration are shown in \cref{fig:100D-density-all}, using the same configuration as in \cref{fig:2D-density-all}. The estimated marginal densities closely align with the reference, and the total variation distances are similar to those from the 2-dimensional problems in \cref{fig:2D-density-all}. This confirms that the performance of DF-GMVI remains robust across different dimensionalities. We conjecture that the difficulty of the sampling problem depends primarily on the complexity of the target density (e.g., the number of modes) rather than the dimensionality.

\begin{figure}[h]
\centering
    \includegraphics[width=0.95\textwidth]{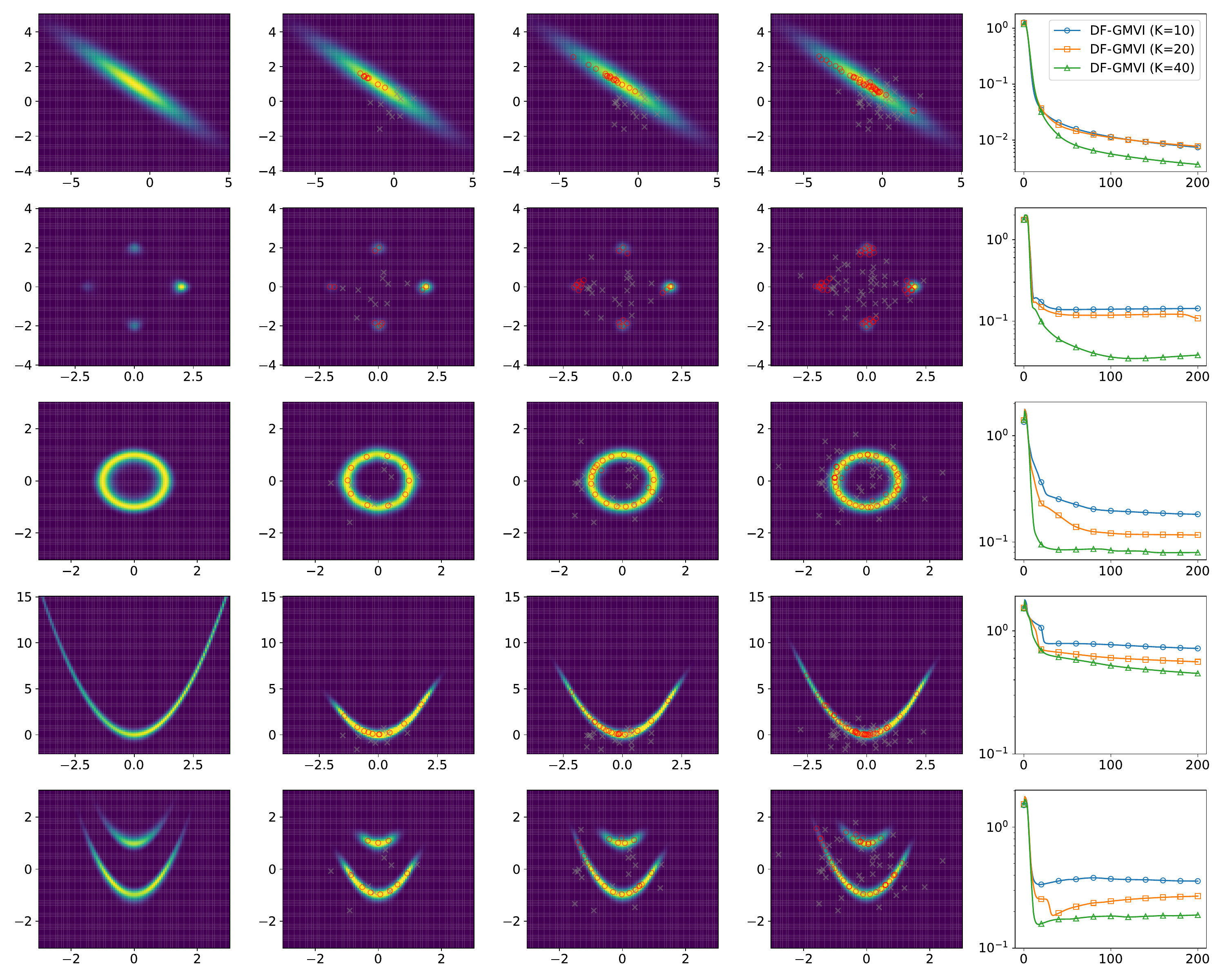}
    \caption{
    Multi-dimensional problems with dimensionality 100, arranged from Case A to Case E from the top row to the bottom row. Each panel shows the marginal density of the reference alongside the marginal densities estimated by DF-GMVI with $K = 10, 20$, and $40$ (from left to right). The projected means of each Gaussian component are marked by red circles, and the projected initial means are marked by gray crosses. The fifth panel displays the total variation between the reference marginal density and the estimated marginal densities across the iterations.
    }
    \label{fig:100D-density-all}
\end{figure}

Finally, we compare DF-GMVI with other state-of-the-art methods, including the Gaussian mixture approximation of the natural gradient flow (NGF-VI)~\cref{eq:Appr-FR-GM} \cite{lin2019fast}, its diagonal covariance variant (NGF-VI-D)~\cite{nguyen2023wasserstein}, the Wasserstein gradient flow (WGF-VI)\cite{lambert2022variational}, BBVI~\cite{ranganath2014black}, and the affine invariant MCMC method \cite{goodman2010ensemble,foreman2013emcee}. 
NGF-VI, NGF-VI-D, and WGF-VI are not derivative free, requiring the computation of the gradient or Hessian of $\Phi_R$. Expectations are approximated using the mean-point approximation \cref{eq:mean_point}. 
Motivated by the covariance positivity insight outlined in \cref{sec:theory}, for the NGF-VI and NGF-VI-D, we implement an adaptive time-stepping scheme defined as
$$\Delta t = \min\{\Delta t_{\max}, \frac{\beta}{\max_k \lVert C_k \E_{\N_k}[\nabla_{\theta}\nabla_{\theta}\log \rho_a^{\rm GM}(t) + \nabla_{\theta}\nabla_{\theta}\Phi_R]\rVert_2}\}$$ 
with $\Delta t_{\max}=0.5$ and $\beta=0.99$, ensuring that the covariance update $C_{k}^{-1}(t+\Delta t) 
        = C_{k}^{-1}(t)  + \Delta t \E_{\N_k}[\nabla_{\theta}\nabla_{\theta}\log \rho_a^{\rm GM}(t) + \nabla_{\theta}\nabla_{\theta}\Phi_R]$ 
maintains positive definiteness. The adaptive time-stepping for NGF-VI significantly improves efficiency compared to the fixed time step used in \cref{sec:guidelines}.
For the WGF-VI, the covariance update equation can be written as
$$\dot{C}_{k}^{-1}
        = C_{k}^{-1}\mathbb{E}_{\N_k} +\mathbb{E}_{\N_k} C_{k}^{-1},\quad
    \E_{\N_k}=\E_{\N_k}[\nabla_{\theta}\nabla_{\theta}\log \rho_a^{\rm GM}(t) + \nabla_{\theta}\nabla_{\theta}\Phi_R].$$
We follow the update method proposed in \cite{lambert2022variational},
$C_k^{-1}(t+\Delta t)=M(t)C_k^{-1}(t)M(t)$, $M(t)=I+\Delta t\mathbb{E}_k(t)$, 
where $I$ is the identity matrix, to achieve first-order accuracy while ensuring covariance positivity. The time step $\Delta t$ is manually chosen to ensure stability, with values $1.4\times 10^{-1}$, $5.0\times 10^{-3}$, $5.0\times 10^{-3}$, $4.0\times 10^{-3}$ and $8.0\times 10^{-4}$ for Case A to Case E, respectively. 
For the BBVI, we adopt the natural gradient flow \eqref{eq:Appr-FR-GM}, derive the mean and covariance update equations by using integration by parts, as follows:
\begin{equation}
\begin{split}
\label{eq:bbvi}
        \dot{m}_{k} 
        &= -\E_{\N_k}[ \bigl(\theta - m_k \bigr)\bigl(\log\rho_a^{\rm GM} + \Phi_R\bigr)],
        \\
        \dot{C}_{k} 
        &= 
        -\E_{\N_k}\Bigl[\bigl( (\theta - m_k)(\theta - m_k)^T - C_k\bigr)(\log \rho_a^{\rm GM}  + \Phi_R)\Bigr].
\end{split}
\end{equation}
We apply the Monte Carlo method with $5$ ensembles (same as DF-GMVI) to compute these integrals. We use empirical covariance instead of $C_k$ in the right hand side of the covariance update equation to reduce variance, and implement a similar adaptive time-stepping scheme defined as
$$\Delta t = \min\{\Delta t_{\max}, \frac{\beta}{\max_k \lVert \E_{\N_k}\bigl[\bigl( (\theta - m_k)(\theta - m_k)^T - C_k\bigr)(\log \rho_a^{\rm GM}  + \Phi_R)\bigr] C_k^{-1}\rVert_2}\}$$ 
with $\Delta t_{\max}=0.5$ and $\beta=0.99$.
Finally, for the affine invariant MCMC method, we use the stretch move method proposed in \cite{goodman2010ensemble}.
To recover the equilibrium distributions, we apply Gaussian kernel density estimation with a selectively chosen bandwidth using particles sampled from the last 10 iterations.

\Cref{fig:MultiModal-Comparison-2D} shows the results obtained from Gaussian mixture variational inference approaches with $K=40$ components and the affine-invariant MCMC method using $J=10^3$ particles, applied to 2-dimensional problems over 500 iterations. We observe that these variational inference approaches require very small time steps, particularly in Case E, to ensure stability. Our proposed adaptive time steppers significantly improve this. 
The BBVI method exhibits random noise, which can be mitigated by increasing the ensemble size for Monte Carlo integrals. Among the variational inference methods, those based on natural gradient flows with affine invariance outperform their counterparts based on Wasserstein gradient flows. Overall,  the results obtained by DF-GMVI, as shown in \cref{fig:2D-density-all,fig:MultiModal-Comparison-2D}, are similarly comparable and avoid the use of derivatives.
In high dimensional spaces, the volume grows exponentially, which makes it difficult for random derivative free methods to explore the space efficiently. As a result, the performance of BBVI and derivative free MCMC methods typically deteriorates with increasing dimensionality, as illustrated in \cref{fig:MultiModal-Comparison-100D}. However, the quadrature rule \cref{def:DF-quadrature} leverages gradient structure, allowing our DF-GMVI method to avoid this issue even in the 100-dimensional problem.
Consequently, our DF-GMVI method outperforms the other approaches on the test problems presented in this subsection.

\begin{figure}[ht]
    \centering
    \includegraphics[width=0.99\linewidth]{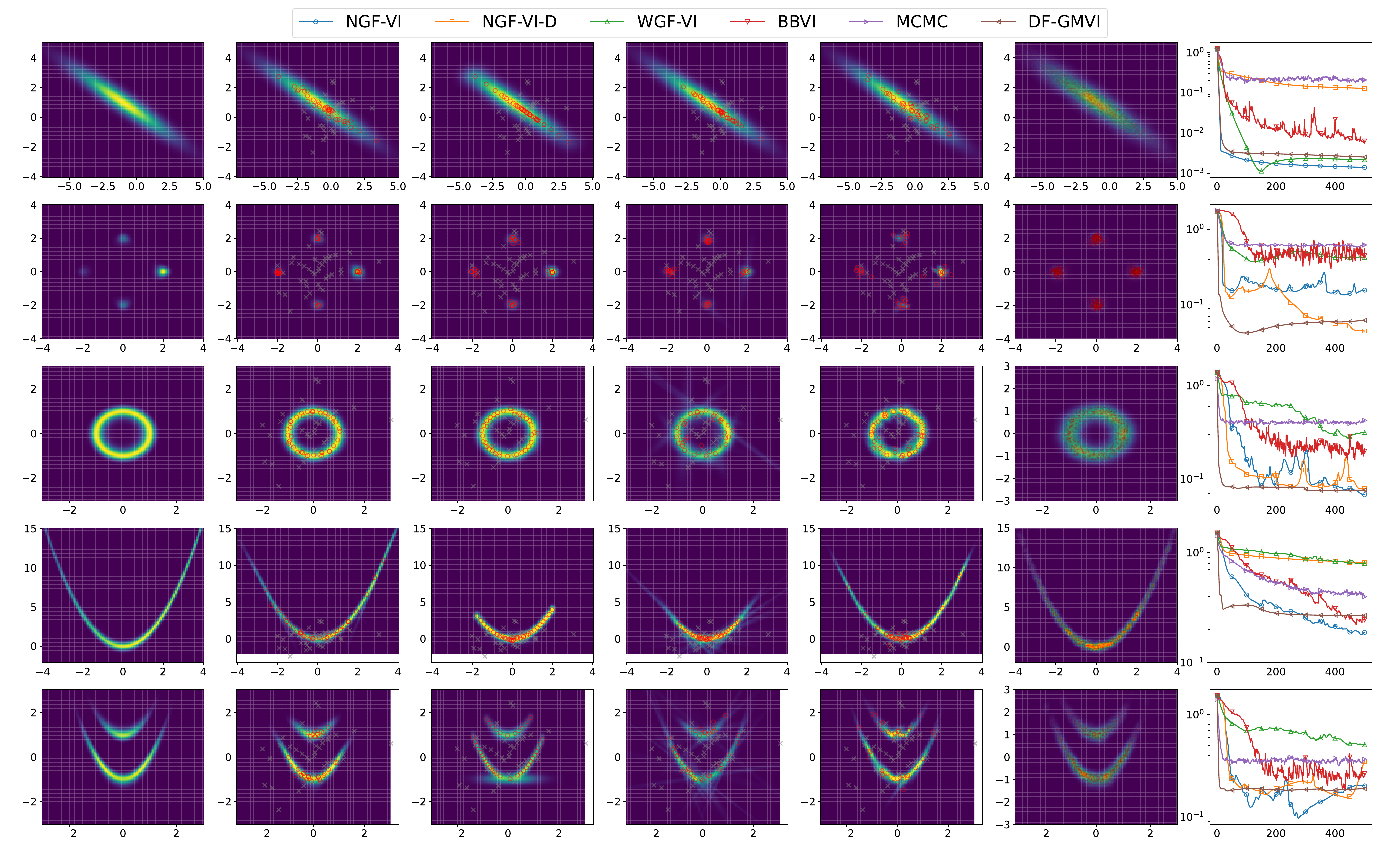}
    \caption{Comparison study for multi-dimensional problems with dimensionality $2$, arranged from Case A to Case E (from top to bottom). Each panel displays the reference density alongside the densities estimated by variational inference methods and the affine invariant MCMC~(left to right: NGF-VI, NGF-VI-D, WGF-VI, BBVI, MCMC). 
    The last figure shows the total variation distance between the reference density and the densities estimated by these methods and DF-GMVI over the iterations.
    }
    \label{fig:MultiModal-Comparison-2D}
\end{figure}

\begin{figure}[ht]
    \centering
    \includegraphics[width=0.95\linewidth]{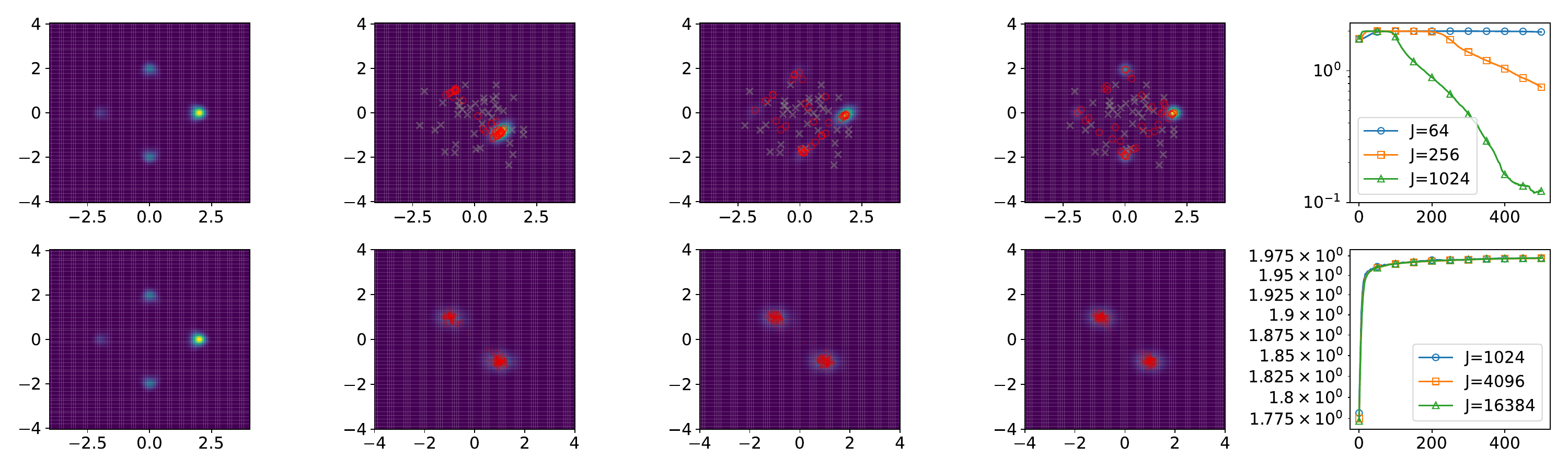}
    \caption{Comparison study for multi-dimensional problems with dimensionality $100$ (Case B).
    The figures display the marginal reference density and the estimated marginal densities obtained by BBVI with $K=40$ and ensemble sizes $J=64, 256$, and $1024$ for Monte Carlo integration, and by affine invariant MCMC method with ensemble sizes $J=2^{10}, 2^{12}$, and $2^{14}$ (from left to right) at the $500$-th iteration.
    The last figure shows the total variation distance between the marginal reference density and the estimated marginal densities over iterations.}  
    \label{fig:MultiModal-Comparison-100D}
\end{figure}

\subsection{High-Dimensional Inverse Problem}

In this subsection, we study the problem of recovering the initial vorticity field, $\omega_0$, of a fluid flow based on measurements taken at later times. The flow is governed by the 2D incompressible Navier-Stokes equation on a periodic domain $D = [0,2\pi]\times[0,2\pi]$, written in the vorticity-streamfunction~$\omega-\psi$ formulation:
\begin{equation}
\label{eq:NS}
\begin{split}
    &\frac{\partial \omega}{\partial t} + (v\cdot\nabla)\omega - \nu\Delta\omega = \nabla \times f, \\
    &\omega = -\Delta\psi \qquad \frac{1}{4\pi^2}\int\psi = 0
    \qquad v = [\frac{\partial \psi}{\partial x_2}, -\frac{\partial \psi}{\partial x_1}]^T + v_b.
\end{split}
\end{equation}
Here $v$ denotes the velocity vector, $\nu=0.01$ denotes the viscosity, $v_b = [0, 2\pi]^T$ denotes the non-zero mean background velocity, and $f(x) = [0,\cos(4x_{(1)})]^T$ denotes the external forcing.

The problem setup follows \cite{chen2024efficient}, which is spatially symmetric with respect to $x_{(1)} = \pi$. The source of the fluid is chosen such that 
$\nabla \times f([x_{(1)},x_{(2)}]^T) = - \nabla \times f([2\pi-x_{(1)},x_{(2)}]^T). $
The observations in the inverse problem are chosen as the difference of pointwise measurements of the vorticity value 
$\omega([x_{(1)}, x_{(2)}]^T) - \omega([2\pi - x_{(1)}, x_{(2)}]^T)$
at $35$ equidistant points in the lower-left region of the domain~(see \Cref{fig:NS-2d-ref}), at $T=0.25$ and $T=0.5$, corrupted with observation error $\eta \sim \N(0, 0.3^2\I)$.  
Under this set-up, both $\omega_0([x_{(1)},x_{(2)}]^T)$ and $-\omega_0([2\pi-x_{(1)},x_{(2)}]^T)$ will lead to the same measurements. Thus the posterior of the Bayesian inverse problem will be at least bi-modal.

We assume the prior of $\omega_0(x, \theta)$ is a Gaussian field with covariance operator $C = (-\Delta)^{-2}$, subject to periodic boundary conditions, on the space of mean zero functions. 
Following \cite{hairer2006ergodicity}, let $\mathbb{Z}_{+}^2 = \{(l_1, l_2)\in\mathbb{Z}^2, l_2 > 0 \}\cup\{ (l_1, 0)\in\mathbb{Z}^2, l_1 > 0\}$ and $\mathbb{Z}_{-}^2 = -\mathbb{Z}_{+}^2$, the corresponding KL expansion of the initial vorticity field is given by 
\begin{equation}
\label{eq:NS-KL-2d}
\omega_0(x, \theta) = \sum_{l\in \mathbb{Z}^2 \backslash \{(0,0)\}} \theta_{(l)} \sqrt{\lambda_{l}} \psi_l(x),
\end{equation}
where eigenpairs are of the form
\begin{equation*}
    \psi_l(x) = \begin{cases}
        \frac{\sin(l\cdot x)}{\sqrt{2}\pi} & l\in \mathbb{Z}_{+}^2\\
        \frac{\cos(l\cdot x)}{\sqrt{2}\pi} & l\in \mathbb{Z}_{-}^2
    \end{cases},\quad \lambda_l = \frac{1}{|l|^{4}},
\end{equation*}
and $\theta_{(l)} \sim \N(0,2\pi^2)$. The KL expansion~\cref{eq:NS-KL-2d} can be rewritten as a sum over positive integers, 
where the eigenvalues $\lambda_l$ are in descending order. 
We truncate the expansion to the first $128$ terms and generate the truth vorticity field $\omega_0(x; \theta_{\rm ref})$ with $\theta_{\rm ref} \in \R^{128}$; we aim to recover the parameters based on observation data. Therefore, the problem is ill-posed, as the number of unknown parameters exceeds the amount of observed data.

\begin{figure}[ht]
\centering
    \includegraphics[width=0.4\textwidth]{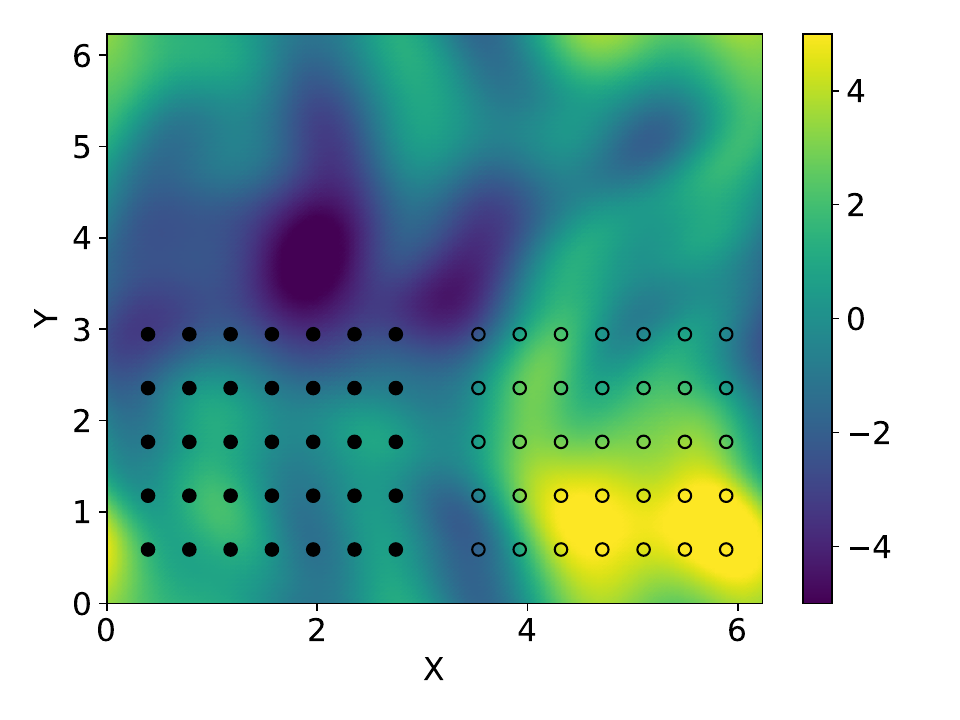}
    \includegraphics[width=0.4\textwidth]{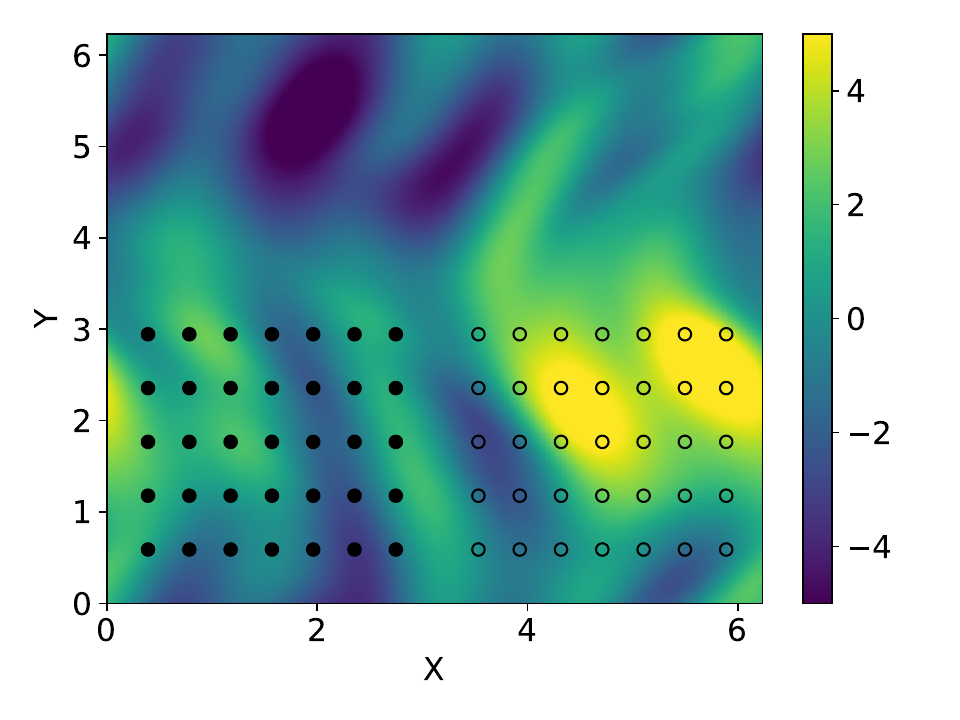}
    \caption{The vorticity field $\omega$ at $T=0.25$ and $T=0.5$ and observations $\omega([x_{(1)}, x_{(2)}]^T) - \omega([2\pi - x_{(1)}, x_{(2)}]^T)$ at $56$ equidistant points (solid black dots). Their mirroring points are marked (empty black dots).}
    \label{fig:NS-2d-ref}
\end{figure}
We apply DF-GMVI with $K = 5$ modes, initialized randomly according to the prior distribution and assigned equal weights.  In \cref{fig:NS-2D-vor}, we display the true initial vorticity field, $\omega_0(x; \theta_{\rm ref})$, along with its mirrored field (where the mirroring of the velocity field induces antisymmetry in the vorticity field), and the five recovered initial vorticity fields, $\omega_0(x; m_k)$, obtained by DF-GMVI at the 200th iteration. Modes 1 and 2 capture $\omega_0(x; \theta_{\rm ref})$, while modes 3 to 5 capture the mirrored field of $\omega_0(x; \theta_{\rm ref})$ itself.
\Cref{fig:NS-2D-convergence} illustrates the relative errors in the vorticity field, optimization errors \( \Phi_R(m_{k}) \), the Frobenius norm \( \lVert C_{k} \rVert_F \), and the Gaussian mixture weights \( w_{k} \) (left to right) at each iteration. This demonstrates that DF-GMVI converges in fewer than 50 iterations.
\Cref{fig:NS-2D-density} shows the marginal distributions of the estimated posterior for the first 16 $\theta_{(l)}$-coefficients obtained by DF-GMVI. The marginal distributions reveal clear bimodality, with the approximate posterior placing high probability mass around the true coefficients while also capturing the alternative possibility (the mirrored value). This example demonstrates DF-GMVI’s potential for effectively handling multimodal posteriors in large-scale, high-dimensional applications.

\begin{figure}[ht]
\centering
    \includegraphics[width=\textwidth]{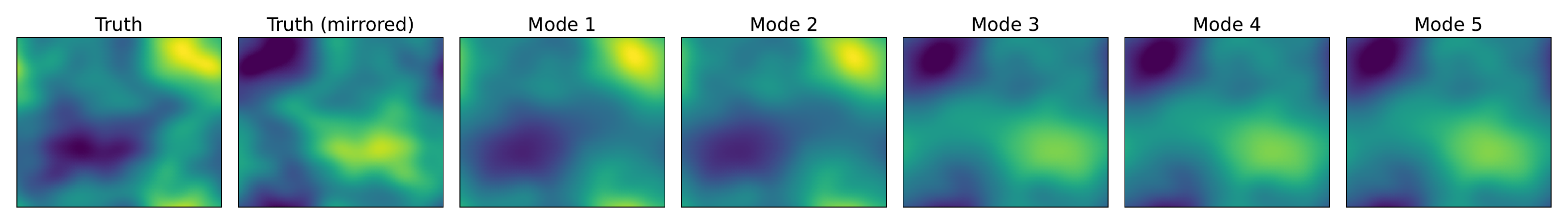}
    \caption{The true initial vorticity field $ \omega_0(x; \theta_{\rm ref})$, and recovered initial vorticity fields $\omega_0(x; m_k)$ obtained by DF-GMVI.}
    \label{fig:NS-2D-vor}
\end{figure}

\begin{figure}[ht]
\centering
    \includegraphics[width=\textwidth]{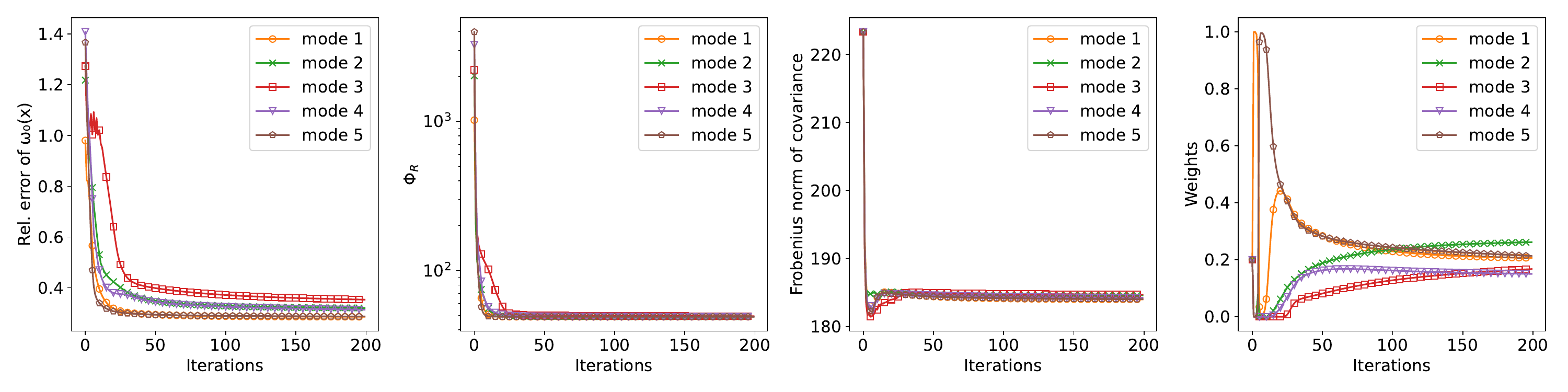}
    \caption{Navier-Stokes flow problem: the relative errors of the initial vorticity field, the optimization errors $\Phi_R(m_{k})$, the Frobenius norm $\lVert C_{k}\rVert_F$, and the Gaussian mixture weights $w_{k}$ (from left to right) for different modes over iterations.}
    \label{fig:NS-2D-convergence}
\end{figure}

\begin{figure}[ht]
\centering
    \includegraphics[width=0.9\textwidth]{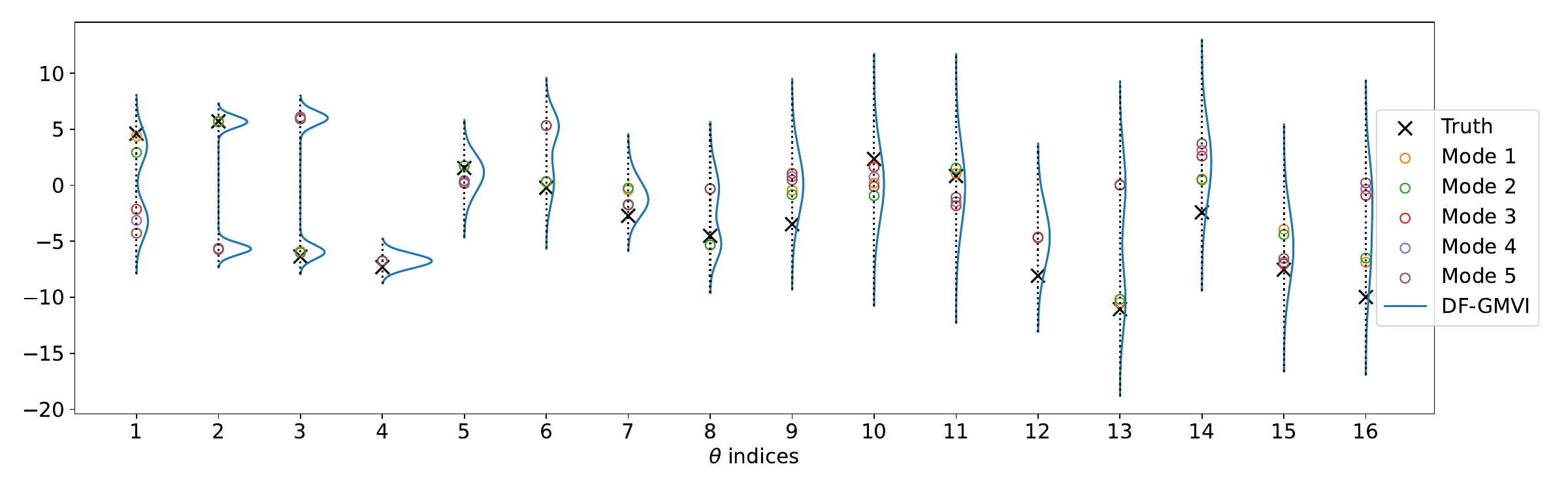}
    \caption{Navier-Stokes flow problem: the true Karhunen-Loeve expansion parameters $\theta_{(i)}$ (black crosses),  and mean estimations of $\theta_{(i)}$ for each modes~(circles) and the associated marginal distributions obtained by DF-GMVI at the $50$th iteration.}
    \label{fig:NS-2D-density}
\end{figure}

\section{Conclusion}
\label{sec:conclusion}
In this paper, we have explored guidelines, such as covariance positivity, to design stable Gaussian mixture variational inference methods. Based on these guidelines,  we have introduced  DF-GMVI as  an effective derivative free approach for solving Bayesian inverse problems. Our numerical experiments demonstrate the superior stability of DF-GMVI and its capability to approximate complex high dimensional posterior distributions.
There are several promising directions for future research.
On the algorithmic side,  several directions are promising for further improving practical applicability: developing stable derivative free variational inference methods beyond Bayesian inverse problems (see \cref{eq:PhiR}); extending the quadrature approximation defined in \cref{def:DF-quadrature} to higher-order schemes; and exploring adaptive strategies for determining the number of Gaussian components $K$.
On the theoretical side, a thorough analysis of the stability and convergence properties of the natural gradient flow \cref{eq:Appr-FR-GM} with Gaussian mixtures, as well as DF-GMVI for general posterior densities, including log-concave densities, could provide insights for their practical applications.

\section*{Acknowledgments} 
The authors acknowledge the support of the National Natural Science Foundation of China through grant 12471403, the Fundamental Research Funds for the Central Universities, and the high-performance computing platform of Peking University.
\bibliographystyle{siamplain}
\bibliography{references}

\appendix

\section{Proof of \cref{prop:affine}}
\label{sec:proof}
\begin{proof}
We note that
\begin{align*}
    \nabla_{\theta}\Phi_R(\theta)= T^T\nabla_{\widetilde{\theta}}
    \widetilde{\Phi}_R(\widetilde{\theta}),\quad 
    \nabla_{\theta}\nabla_{\theta}\Phi_R(\theta)= T^T\nabla_{\widetilde{\theta}}\nabla_{\widetilde{\theta}}\widetilde{\Phi}_R(\widetilde{\theta})T. 
\end{align*}
    
\begin{enumerate}
\item To prove the affine invariance of the natural gradient flow defined by \cref{eq:Appr-FR-GM}, it suffices to verify that under the affine mapping
\begin{align*}
    \widetilde{w}_{k} = w_{k}, \quad \widetilde{m}_{k} = T m_{k} + d, \quad \widetilde{C}_{k} = T C_{k} T^T,
\end{align*}
$\widetilde{w}_{k}, \widetilde{m}_{k}$ and $\widetilde{C}_{k}$ also satisfy the corresponding equation \cref{eq:Appr-FR-GM}.
We simplify the notation by $\widetilde{\N}_k(\widetilde{\theta})=\N(\widetilde{\theta},\widetilde{m_k},\widetilde{C_k}),\,\rho_{\widetilde{a}}^{\mathrm{GM}}(\widetilde{\theta})=\sum_{k=1}^K\widetilde{w}_{k}\widetilde{\N}_k(\widetilde{\theta})$. We have $\widetilde{\N}_k(\widetilde{\theta})=|\det T|^{-1}\N_k(\theta),\,\rho_{\widetilde{a}}^{\mathrm{GM}}(\widetilde{\theta})=|\det T|^{-1}\rho_a^{\mathrm{GM}}(\theta)$, and thus $\nabla_{\widetilde{\theta}}\log\rho_{\widetilde{a}}^{\rm GM}(\widetilde{\theta})=T^{-T} \nabla_{\theta}\log\rho_{a}^{\rm GM}(\theta)$, $\nabla_{\widetilde{\theta}}\nabla_{\widetilde{\theta}}\log\rho_{\widetilde{a}}^{\rm GM}(\widetilde{\theta}) =T^{-T} \nabla_{\theta}\nabla_{\theta}\log\rho_{a}^{\rm GM}(\theta)T^{-1}$. By these equations and change-of-variable formula, we have that
\begin{align*}
    \dot{\widetilde{m}}_{k}=T\dot{m}_k    
    &=-T C_k \int \N_k(\theta) \Bigl(\nabla_{\theta}\log\rho_{a}^{\rm GM}(\theta)+\nabla_\theta \Phi_R(\theta) \Bigr)\dd \theta\\
    &=-T C_k T^T\int \widetilde{\N}_k(\widetilde{\theta}) T^{-T}\Bigl(\nabla_{\theta}\log\rho_{a}^{\rm GM}(\theta)+\nabla_\theta \Phi_R (\theta) \Bigr)\dd \widetilde{\theta}\\
    &=-\widetilde{C}_k\int \widetilde{\N}_k(\widetilde{\theta}) \Bigl( \nabla_{\widetilde{\theta}} \log\rho_{\widetilde{a}}^{\rm GM}(\widetilde{\theta})  +  \nabla_{\widetilde{\theta}} \widetilde{\Phi}_R (\widetilde{\theta}) \Bigr)  \dd \widetilde{\theta},\\
    \dot{\widetilde{C}}_{k}=T\dot{C}_kT^T    
    &=-T C_k \Bigr( \int \N_k(\theta) \bigl(\nabla_{\theta}\nabla_{\theta}\log\rho_{a}^{\rm GM}(\theta)+\nabla_\theta \nabla_{\theta} \Phi_R(\theta) \bigr)\dd \theta \Bigr)C_k T^T  \\
    &=-T C_k T^{T} \Bigr( \int \widetilde{\N}_k(\widetilde{\theta})  T^{-T}\bigl(\nabla_{\theta}\nabla_{\theta}\log\rho_{a}^{\rm GM}(\theta)+\nabla_\theta \nabla_{\theta}\Phi_R(\theta) \bigr) T^{-1} \dd \widetilde{\theta} \Bigr)T C_k T^T \\
    &=-\widetilde{C}_k\Bigl(\int \widetilde{\N}_k(\widetilde{\theta})\bigl(\nabla_{\widetilde{\theta}}\nabla_{\widetilde{\theta}}\log \rho_{\widetilde{a}}^{\rm GM}(\widetilde{\theta})  + \nabla_{\widetilde{\theta}}\nabla_{\widetilde{\theta}}\widetilde{\Phi}_R(\widetilde{\theta})\bigr) \dd\widetilde{\theta}\Bigr) \widetilde{C}_k,\\
    \dot{\widetilde{w}}_{k}=\dot{w}_k    
    &=-w_k\int \Bigl(\N_k(\theta) -  \rho_a^{\rm GM}(\theta)\Bigr)\bigl(\log \rho_a^{\rm GM}(\theta)  + \Phi_R(\theta) \bigr) \dd\theta \\
    &=-\widetilde{w}_k\int \Bigl(\widetilde{\N}_k(\widetilde{\theta}) -  \rho_{\widetilde{a}}^{\rm GM}(\widetilde{\theta})\Bigr)\bigl(\log |\det T|+\log \rho_{\widetilde{a}}^{\rm GM}(\widetilde{\theta})  + \widetilde{\Phi}_R(\widetilde{\theta}) \bigr) \dd\widetilde{\theta} \\
    &=-\widetilde{w}_k\int \Bigl(\widetilde{\N}_k(\widetilde{\theta}) -  \rho_{\widetilde{a}}^{\rm GM}(\widetilde{\theta})\Bigr)\bigl(\log \rho_{\widetilde{a}}^{\rm GM}(\widetilde{\theta})  + \widetilde{\Phi}_R(\widetilde{\theta}) \bigr) \dd\widetilde{\theta},
\end{align*}
where in the last equality, we used the fact that $\widetilde{\N}_k$ and $\rho_{\widetilde{a}}^{\rm GM}$ are both densities and thus $\int \bigl(\widetilde{\N}_k(\widetilde{\theta}) -  \rho_{\widetilde{a}}^{\rm GM}(\widetilde{\theta})\bigr)\log |\det T| \dd\widetilde{\theta}=0$. This completes the proof.
\item To prove that the DF-GMVI algorithm \cref{eq:alg} is $\mathcal{T}$-invariant, it suffices to show that the quadrature rules defined in \cref{def:DF-quadrature} and \cref{def:GM-quadrature} satisfy the following equations for any $T\in\T$:
\begin{equation}
    \label{eq:affine-trans-QR-aim}
    \begin{aligned}
    \mathrm{\mathsf{QR}}_{\widetilde{\N}_k}\bigl\{\nabla_{\widetilde{\theta}}\nabla_{\widetilde{\theta}}\log \rho_{\widetilde{a}}^{\rm GM} + \nabla_{\widetilde{\theta}}\nabla_{\widetilde{\theta}}\widetilde{\Phi}_R\bigr\}
    &=T^{-T}\mathrm{\mathsf{QR}}_{\N_k}\bigl\{\nabla_{\theta}\nabla_{\theta}\log \rho_a^{\rm GM} + \nabla_{\theta}\nabla_{\theta}\Phi_R\bigr\}T^{-1},\\
    \mathrm{\mathsf{QR}}_{\widetilde{\N}_k} \bigl\{\nabla_{\widetilde{\theta}} \log\rho_{\widetilde{a}}^{\rm GM} + \nabla_{\widetilde{\theta}} \widetilde{\Phi}_R \bigr\}
    &=T^{-T}\mathrm{\mathsf{QR}}_{\N_k} \bigl\{\nabla_{\theta} \log\rho_a^{\rm GM}  +  \nabla_{\theta} \Phi_R \bigr\}, \\
    \mathrm{\mathsf{QR}}_{\widetilde{\N}_k} \bigl\{\log \rho_{\widetilde{a}}^{\rm GM} + \widetilde{\Phi}_R \bigr\}
    &=\mathrm{\mathsf{QR}}_{\N_k} \bigl\{\log \rho_a^{\rm GM}  + \Phi_R \bigr\}-\log|\det T|.
    \end{aligned}
\end{equation}
Here in the last equality, we can tolerate a constant difference since we will normalize ${w_{k}(t+\Delta t)}_{k=1}^{K}$ after the update \cref{eq:alg}. Given any Gaussian density $\N=\N(\theta;m,C)$, an affine mapping transforms it to another Gaussian $\widetilde{\N}=\N(\widetilde{\theta};\widetilde{m},\widetilde{C})$. The square root matrix computed by Cholesky decomposition satisfies $\sqrt{\widetilde{C}}=\sqrt{T C T^T}=T\sqrt{C}$, when  $T$ is an invertible lower triangular matrix. Hence, the quadrature points generated for the two Gaussian densities according to \cref{def:DF-quadrature} satisfy
\begin{align*}
    \widetilde{\theta}_0&
    =\widetilde{m}=Tm+d=T\theta_0+d,\\
    \widetilde{\theta}_i&=\widetilde{m}+\alpha [\sqrt{\widetilde{C}}]_i=Tm+d+\alpha T[\sqrt{C}]_i=T\theta_i+d,\\
    \widetilde{\theta}_{N_\theta+i}&=\widetilde{m}-\alpha [\sqrt{\widetilde{C}}]_i=Tm+d-\alpha T[\sqrt{C}]_i=T\theta_{N_\theta+i}+d,
\end{align*}
for $1\leq i\leq N_\theta$. These equations imply that $\widetilde{\F}(\widetilde{\theta_i})=\F(\theta_i)$ for $0\leq i\leq2N_\theta$, and thus $\widetilde{c}=c$, $\widetilde{b_i}=b_i$, $\widetilde{a_i}=a_i$ for $1\leq i\leq N_\theta$, and then $\widetilde{B}=B$, $\widetilde{A}=A$. Consequently,
\begin{equation}
    \label{eq:affine-trans-QR-1}
    \begin{aligned}
    \mathrm{\mathsf{QR}}_{\widetilde{\N}}\{\widetilde{\Phi}_R\}
    &=\frac{1}{2}\widetilde{c}^T\widetilde{c}=\frac{1}{2}c^Tc= \mathrm{\mathsf{QR}}_{\N}\{\Phi_R\},\\
    \mathrm{\mathsf{QR}}_{\widetilde{\N}}\{\nabla_{\widetilde{\theta}}\widetilde{\Phi}_R\}
    &=\sqrt{\widetilde{C}}^{-T}\widetilde{B}^T\widetilde{c}
    =T^{-T}\sqrt{C}^{-T}B^Tc=T^{-T} \mathrm{\mathsf{QR}}_{\N} \{\nabla_\theta \Phi_R\},\\
    \mathrm{\mathsf{QR}}_{\widetilde{\N}}\{\nabla_{\widetilde{\theta}}\nabla_{\widetilde{\theta}}\widetilde{\Phi}_R\}
    &=\sqrt{\widetilde{C}}^{-T}(6\mathrm{Diag}(\widetilde{A}^T\widetilde{A}) + \widetilde{B}^T \widetilde{B})\sqrt{\widetilde{C}}^{-1}\\
    &=T^{-T}\sqrt{C}^{-T}(6\mathrm{Diag}(A^TA) + B^T B)\sqrt{C}^{-1}T^{-1} =T^{-T}\mathrm{\mathsf{QR}}_{\N} \{\nabla_\theta\nabla_\theta \Phi_R \}T^{-1}.
    \end{aligned}
\end{equation}
By the \cref{def:GM-quadrature}, we have that
\begin{equation}
    \label{eq:affine-trans-QR-2}
    \begin{aligned}
    &\mathrm{\mathsf{QR}}_{\widetilde{\N}_k}\{\log\rho_{\widetilde{a}}^{\rm GM}\}
    =\log \rho_{\widetilde{a}}^{\rm GM}(\widetilde{m}_k)
    =\log \left( |\det T|^{-1} \rho_{a}^{\rm GM}(m_k) \right)
    =\mathrm{\mathsf{QR}}_{\N_k}\{\log\rho_a^{\rm GM}\}-\log|\det T|,
    \\
    &\mathrm{\mathsf{QR}}_{\widetilde{\N}_k}\{\nabla_{\widetilde{\theta}}\log\rho_{\widetilde{a}}^{\rm GM}\}
    =\nabla_{\widetilde{\theta}} \log \rho_{\widetilde{a}}^{\rm GM}(\widetilde{m}_k)
    =T^{-T}\nabla_{\theta}\log \rho_{a}^{\rm GM}(m_k)
    =T^{-T}\mathrm{\mathsf{QR}}_{\N_k}\{\nabla_{\theta}\log\rho_a^{\rm GM}\}.
    \end{aligned}
\end{equation}
Using the same notations as in \cref{def:GM-quadrature}, we have $\widetilde{v_i}(\widetilde{\theta})=\widetilde{C}_i^{-1}(\widetilde{\theta} - \widetilde{m_i})=(TC_iT^T)^{-1}(T\theta - T m_i )=T^{-T}v_i(\theta)$, and thus it is easy to check that
\begin{equation}
    \label{eq:affine-trans-QR-3}
    \mathrm{\mathsf{QR}}_{\widetilde{\N_k}}\{\nabla_{\widetilde{\theta}}\nabla_{\widetilde{\theta}}\log\rho_{\widetilde{a}}^{\rm GM}\}
    =T^{-T}\mathrm{\mathsf{QR}}_{\N_k}\{\nabla_{\theta}\nabla_{\theta}\log\rho_a^{\rm GM}\}T^{-1}.
\end{equation}
Combining \cref{eq:affine-trans-QR-1}, \cref{eq:affine-trans-QR-2} and \cref{eq:affine-trans-QR-3}, we finish the proof of \cref{eq:affine-trans-QR-aim}.
\end{enumerate}
\end{proof}

\section{Parameter Sensitivity Study}
\label{sec:parameter}
In this section, we study the sensitivity of DF-GMVI on hyperparameters $\alpha$ and $\Delta t$.
Theoretically, the finite difference approximation in \cref{def:DF-quadrature} exhibits low sensitivity to relatively small $\alpha$, and \cref{prop:positivity} guarantees covariance positive definiteness for all $\Delta t \in (0,1)$.
Empirically, we test the sensitivity by running DF-GMVI (K=40) for 200 iterations on Case E in \cref{ssec:Multi-Dimensional-Problems}, with different choices of $\alpha$ and $\Delta t$. 
To test the sensitivity on $\alpha$, we fix $\Delta t =0.5$ and assign $\alpha = 10^{-1}, 10^{-3}$ and $10^{-5}$. 
To test the sensitivity on $\Delta t$, we fix $\alpha = 10^{-3}$ and assign $\Delta t = 0.25, 0.50$ and $0.75$. 
The results are shown in \cref{fig:DF-GMVI-param-compare}, which demonstrate that the performance of DF-GMVI remains stable across different settings of parameters $\alpha$ (relatively small) and $\Delta t\in(0,1)$. We believe that the users can fix $\alpha=10^{-3}$ and $\Delta t=0.5$ for different Bayesian inverse problems.

\begin{figure}[h]
    \centering
    \includegraphics[width=0.7\linewidth]{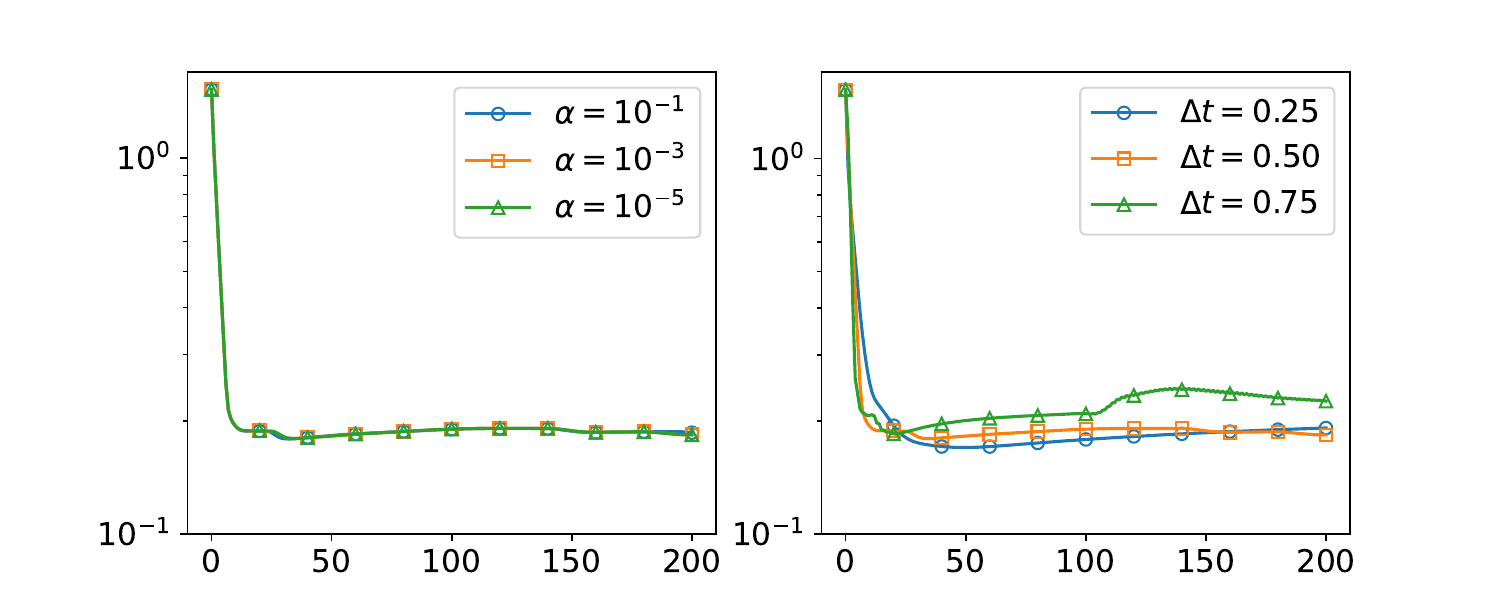}
    \caption{Comparison for different choices of $\alpha$ and $\Delta t$ on the Case E of multi-dimensional problems with dimensionality 2. The left figure shows the total variation between the reference density and the densities estimated by the DF-GMVI over the iterations using different $\alpha \in \{10^{-1},10^{-3},10^{-5}\}$ with $\Delta t=0.5$, and the right figure shows the total variation using different $\Delta t\in\{0.25,0.50,0.75\}$ with $\alpha=10^{-3}$.}
    \label{fig:DF-GMVI-param-compare}
\end{figure}

\section{Supplement Material}

\subsection{Natural Gradient Variational Inference}

We review natural gradient variational inference from the perspective of gradient flow. Then we discuss both the Gaussian variational family and the Gaussian mixture variational family in Sections \ref{ssec:VI-G} and \ref{ssec:VI-GM}, respectively. Variational inference aims to approximate the posterior distribution by minimizing the Kullback–Leibler (KL) divergence~\cite{wainwright2008graphical,blei2017variational}
\begin{equation}
\label{eq:KL-s}
    {\rm KL}[\rho_a \Vert \rho_{\rm post}] = \int \rho_a \log\Bigl(\frac{\rho_a}{\rho_{\rm post}}\Bigr)\dd\theta
\end{equation}
over a variational family of densities $\rho_a$, parameterized by $a\in \R^{N_a}$. 
When employing gradient descent, taking the continuous time limit, the parameter $a$ evolves according to the gradient flow:
\begin{align} 
    \frac{\dd a}{\dd t} = -\nabla_a  {\rm KL}[\rho_a \Vert \rho_{\rm post}].
\end{align}
The steepest descent direction can be interpreted as 
\begin{equation} 
     -\nabla_a  {\rm KL}[\rho_a \Vert \rho_{\rm post}] = \argmin_\sigma \frac{\langle \nabla_a{\rm KL}[\rho_{a} \Vert \rho_{\rm post}] , \sigma\rangle}{\sqrt{\langle\sigma,  \sigma \rangle}} , 
\end{equation}
where the numerator denotes the descent quantity along $\sigma$ and the denominator denotes the length of $\sigma$ under the Euclidean inner-product $\langle\cdot, \cdot \rangle$ in $\R^{N_a}$.  
When a more general metric, induced by the inner product $\langle\cdot, \fM(\rhoa) \cdot \rangle$ with metric tensor $\fM(\rhoa)$ is used, the steepest descent direction becomes
\begin{equation} 
     -\fM(\rhoa)^{-1}\nabla_a  {\rm KL}[\rho_a \Vert \rho_{\rm post}] = \argmin_\sigma \frac{\langle \nabla_a{\rm KL}[\rho_{a} \Vert \rho_{\rm post}] ,  \sigma\rangle}{\sqrt{\langle\sigma,  \fM(\rhoa) \sigma \rangle}}.  
\end{equation}
This modification leads to a different gradient flow for updating $a$ as
\begin{align} 
    \frac{\dd a}{\dd t} = -\fM(\rhoa)^{-1}\nabla_a  {\rm KL}[\rho_a \Vert \rho_{\rm post}].
\end{align}

The present work focuses on the natural gradient~\cite{amari1998natural},  where the metric tensor is the Fisher information matrix~\cite{rao1945information}
\begin{equation*}
    \fM(\rhoa) = {\rm FIM}(a):= \int  \frac{\partial \log \rho_{\rhoa}(\theta)}{\partial \rhoa} \frac{\partial \log \rho_{\rhoa}(\theta)}{\partial \rhoa}^T \rho_\rhoa(\theta) \dd \theta.
\end{equation*}
The Fisher-information matrix is related to the Hessian matrix of the KL-divergence~\cite{james2020new},
because the Taylor expansion of the KL-divergence~\eqref{eq:KL-s} between $\rho_\rhoa$ and its neighbor $\rho_{\rhoa + \dd\rhoa}$, gives
\begin{equation} 
    {\rm KL}[\rho_{\rhoa + \dd\rhoa}\Vert \rho_{\rhoa}] = \frac{1}{2} \dd\rhoa ^T {\rm FIM}(a)  \dd\rhoa + o(\lVert da\rVert^2).
\end{equation}
Here we used the fact that ${\rm KL}[\rho_{\rhoa}\Vert \rho_{\rhoa}] = 0$ and $\nabla_a {\rm KL}[\rho_{\rhoa}\Vert \rho_{\rhoa}] = 0$ (the gradient is with respect to the first $\rho_a$).
Preconditioning the gradient with the Fisher information matrix inherently incorporates geometric information.
Consequently, akin to Newton's method, the Fisher information matrix finds extensive application as a preconditioner to accelerate the optimization process in variational inference.
This gives rise to  the concept of natural gradient or natural gradient variational inference~\cite{amari1998natural,lin2019fast}, which corresponds to the following gradient flow:
\begin{align} 
\label{eq:NGF}
    \frac{\dd a}{\dd t} = -{\rm FIM}(a)^{-1}\nabla_a  {\rm KL}[\rho_a \Vert \rho_{\rm post}].
\end{align}
Therefore, once the variational family is specified, discretizing the gradient flow introduced above enables the derivation of various practical sampling methods. In what follows, we provide two concrete examples.

\subsubsection{Gaussian Approximation}
\label{ssec:VI-G}
Gaussian variational inference operates within a Gaussian parametric space,  where the variational family \[\rho_a^{\rm G}(\theta) = \N(\theta; m, C)\] represents a Gaussian parameterized by its mean 
$m \in \R^{N_\theta}$ and covariance $C\in \R^{N_\theta \times N_\theta}$, collectively denoted by the parameter vector $a:=[m,C]$.  The Fisher information matrix of Gaussian is 
\begin{equation}
\label{eq:Gaussian-FIM}
{\rm FIM}(a) = 
\begin{bmatrix}
    C^{-1} & \\
    & X
\end{bmatrix},
\end{equation}
where $X$ is a 4-th order tensor. Its action on any matrix $Y \in \R^{N_{\theta}\times N_{\theta}}$  is given by
\begin{equation}
X Y =     \frac{1}{4}C^{-1} (Y + Y^T) C^{-1}.
\end{equation}
We write down the calculation of Fisher information matrix here. First we have that
\begin{align*}
    \int (\nabla_m \log\rho_a^{\rm G})^T(\nabla_m \log\rho_a^{\rm G})\rho_a^{\rm G}\dd \theta &= C^{-1}\Big(\int(\theta-m)^T(\theta-m)\rho_a^{\rm G}\dd \theta\Bigr)C^{-1}=C^{-1},\\
    \int (\nabla_m \log\rho_a^{\rm G})^T(\nabla_C \log\rho_a^{\rm G})\rho_a^{\rm G}\dd \theta &= 0.
\end{align*}
For the covariance item, define a 4-th order tensor 
\begin{align*}
    X=\int \big(\frac{1}{2}C^{-1}(\theta-m)(\theta-m)^T C^{-1}-\frac{1}{2}C^{-1}\bigr)\otimes \big(\frac{1}{2}C^{-1}(\theta-m)(\theta-m)^T C^{-1}-\frac{1}{2}C^{-1}\bigr)\rho_a^{\rm G} \dd \theta.
\end{align*}   
By change of variable $y=C^{-\frac{1}{2}}(\theta-m)$,  we have
\begin{align*}
        X &=\frac{1}{4}\int \left(C^{-\frac{1}{2}}(yy^T-I)C^{-\frac{1}{2}}\right)\otimes \left(C^{-\frac{1}{2}}(yy^T-I)C^{-\frac{1}{2}}\right)\N(y;0,I) \dd y,\\
        X[ij,kl] &=\frac{1}{4}\int \sum_{p,q} C^{-\frac{1}{2}}[i,p](y_py_q-\delta_{p,q})C^{-\frac{1}{2}}[q,j] \sum_{r,s} C^{-\frac{1}{2}}[k,r](y_ry_s-\delta_{r,s})C^{-\frac{1}{2}}[s,l] \N(y;0,I) \dd y \\
        &=\frac{1}{4} \sum_{p,q,r,s} C^{-\frac{1}{2}}[i,p]C^{-\frac{1}{2}}[q,j]C^{-\frac{1}{2}}[k,r] C^{-\frac{1}{2}}[s,l] \int (y_py_q-\delta_{p,q})(y_ry_s-\delta_{r,s})\N(y;0,I) \dd y\\
        &=\frac{1}{4} \sum_{p,q,r,s} C^{-\frac{1}{2}}[i,p]C^{-\frac{1}{2}}[q,j]C^{-\frac{1}{2}}[k,r] C^{-\frac{1}{2}}[s,l] (\delta_{r,p}\delta_{s,q}+ \delta_{r,q}\delta_{s,p})\\
        &=\frac{1}{4} \left(C^{-1}[ik] C^{-1}[jl]+C^{-1}[il] C^{-1}[jk]\right).
\end{align*}
For all $Y\in\R^{N_\theta\times N_\theta}$, the act of $X$ on $Y$ is 
\begin{align*}
        (XY)_{ij} &=\sum_{k,l} X[ij,kl]Y[kl]\\
        &=\frac{1}{4}\sum_{k,l} (C^{-1}[ik] C^{-1}[jl]+C^{-1}[il] C^{-1}[jk])Y[kl]\\
        &=\frac{1}{4} (C^{-1}YC^{-1}+C^{-1}Y^T C^{-1})_{ij},
\end{align*}
and thus $XY=\frac{1}{4}C^{-1}(Y+Y^T)C^{-1}$.

Incorporating the Fisher information matrix \cref{eq:Gaussian-FIM} into \cref{eq:NGF} yields the following natural gradient flow
\begin{equation}
\begin{aligned}
\label{eq:Gaussian Fisher-Rao}
\dot  m_t = - C_t\E_{\rho_{a_t}^{\rm G}}[\nabla \Phi_R ], \qquad
\dot  C_t = C_t - C_t \E_{\rho_{a_t}^{\rm G}}[\nabla^2  \Phi_R ]C_t .
\end{aligned}
\end{equation}
By using the fact 
$
  \dot{C}_t^{-1}
   = -C_t^{-1}\frac{\dd  C_t }{\dd  t} C_t^{-1},
$
we can rewrite the covariance evolution equation as 
\begin{equation}
\label{eq:dC_t^-1}
\begin{aligned}
\dot{C}_t^{-1}
&=  \E_{\rho_{a_t}^{\rm G}}[\nabla^2  \Phi_R ] - C_t^{-1}.
\end{aligned}
\end{equation}

The above dynamics \cref{eq:Gaussian Fisher-Rao} is affine invariant~\cite[Section 5.4.1]{chen2023sampling}. Consequently, when the posterior is Gaussian, it converges exponentially fast to the posterior at a rate of $\mathcal{O}(e^{-t})$~\cite[Theorem 5.6]{chen2023sampling}\cite{garbuno2020interacting,carrillo2021wasserstein,burger2023covariance}, where the exponent of the convergence rate is independent of the posterior. Furthermore, the time discretization of the natural gradient flow \cref{eq:Gaussian Fisher-Rao} exhibits superior stability. This stability arises from the fact that when the posterior density is log-concave, i.e., when $\nabla^2  \Phi_R$ is positive semidefinite, the forward Euler discretization of \cref{eq:dC_t^-1} with $0 < \Delta t < 1$  ensures covariance positivity:
\begin{align*}
    C_{t+\Delta t}^{-1}  =  C_{t}^{-1}  + \Delta t \bigl(\E_{\rho_{a_t}^{\rm G}}[\nabla^2  \Phi_R ] - C_t^{-1}\bigr) = (1 - \Delta t)C_{t}^{-1}  + \Delta t \E_{\rho_{a_t}^{\rm G}}[\nabla^2  \Phi_R ] \succ 0,
\end{align*}
provided that the approximation of $\E_{\rho^{\rm G}_{a_t}}[\nabla^2  \Phi_R ]$ remains positive semidefinite.

\subsubsection{Gaussian Mixture Approximation}
\label{ssec:VI-GM}
Gaussian mixture variational inference considers the Gaussian mixture parametric space,  where the variational family \[\rho_a^{\rm GM}(\theta) = \sum_{k=1}^{K} w_k \N(\theta; m_k, C_k)\] is a $K$-component Gaussian mixture, parameterized by means $m_k\in\R^{N_\theta}$, covariances $C_k\in\R^{N_\theta \times N_\theta}$ and weights $w_k \in \mathbb{R}_{\geq 0}$, collectively denoted by the parameter vector \[a:=[m_1, ..., m_k, ..., m_K, C_1. ..., C_k, ..., C_K, w_1, ..., w_k, ..., w_K].\] 
Weights satisfy $\sum_{k=1}^{K} w_k = 1$.
To compute the gradient flow in \cref{eq:NGF}, we first evaluate the derivatives of the KL divergence in \cref{eq:KL-s} with respect to $a$:
\begin{subequations}
\label{eq:partial_KL_partial_a}
\begin{align}
&\frac{\partial {\rm KL}[\rho_a^{\rm GM} \Vert \rho_{\rm post}]}{\partial m_k} = w_k\int \N_k(\theta) \Bigl( \nabla_{\theta} \log\rho_a^{\rm GM}  +  \nabla_{\theta} \Phi_R \Bigr)  \dd\theta,
\\
& \frac{\partial{\rm KL}[\rho_a^{\rm GM} \Vert \rho_{\rm post}]}{\partial C_k} = \frac{w_k}{2}\int \N_k(\theta) \Bigl(\nabla_{\theta}\nabla_{\theta}\log \rho_a^{\rm GM}  + \nabla_{\theta}\nabla_{\theta} \Phi_R \Bigr) \dd\theta,
\\
&\frac{\partial{\rm KL}[\rho_a^{\rm GM} \Vert \rho_{\rm post}]}{\partial w_k} = \int \N_k(\theta) \Bigl(\log\frac{\rho_a^{\rm GM}}{\rho_{\rm post}} + 1\Bigr) \dd\theta.
\end{align}
\end{subequations}
Here, we simplify the notation by denoting $\N_k(\theta)$ as $\N(\theta; m_k, C_k)$. The steepest descent direction is determined by the following constrained optimization problem:
\begin{equation}
\label{eq:descent-direction-GM}
     \argmin_\sigma \frac{\langle \nabla_a{\rm KL}[\rho_{a}^{\rm GM} \Vert \rho_{\rm post}] , \sigma\rangle}{\sqrt{\langle\sigma,  {\rm FIM}(a)\sigma \rangle}}   \quad \textrm{ s.t. } \quad \sum_{k=1}^{K}\sigma_{\dot{w}_k} = 0,
\end{equation}
where $\{\sigma_{\dot{w}_k}\}$ represents the descent directions corresponding to the weights. We introduce a Lagrangian multiplier $\lambda \in \R$ and write down the Karush–Kuhn–Tucker conditions of \cref{eq:descent-direction-GM}:
\begin{equation*}
    \nabla_d\Big(\frac{\langle\nabla_a {\rm KL}(\rho_a^{\rm GM}||\rho_{\rm post}),d\rangle}{\sqrt{\langle d, {\rm FIM}(a)d\rangle}}\Bigr) + \lambda\nabla_d \Big(\sum_{k=1}^K d_{w_k}\Bigr)=0,\quad \sum_{k=1}^K d_{w_k}=0,
\end{equation*}
where the stability condition yields
\begin{equation}
    \label{eq: kkt}
    \begin{split}
    \langle \nabla_a {\rm KL}[\rho_{a}^{\rm GM} \Vert \rho_{\rm post}], d\rangle\, [{\rm FIM}(a)d]_{m_k} &=
    \langle d, {\rm FIM}(a)d\rangle\, \nabla_{m_k}  {\rm KL}[\rho_{a}^{\rm GM} \Vert \rho_{\rm post}],
   \\
   \langle \nabla_a {\rm KL}[\rho_{a}^{\rm GM} \Vert \rho_{\rm post}], d\rangle\, [{\rm FIM}(a)d]_{C_k} &=
   \langle d, {\rm FIM}(a)d\rangle\, \nabla_{C_k}  {\rm KL}[\rho_{a}^{\rm GM} \Vert \rho_{\rm post}],
   \\
   \langle \nabla_a {\rm KL}[\rho_{a}^{\rm GM} \Vert \rho_{\rm post}], d\rangle\, [{\rm FIM}(a)d]_{w_k} &= 
   \langle d, {\rm FIM}(a)d\rangle\, \nabla_{w_k}  {\rm KL}[\rho_{a}^{\rm GM} \Vert \rho_{\rm post}]   -\lambda\langle d, {\rm FIM}(a)d\rangle^{\frac{3}{2}}.
    \end{split}
\end{equation}
From \cref{eq: kkt} the direction of $d$ can be determined, which leads to the following natural gradient flow:
\begin{align}
    \label{eq:FR-GM}
  \begin{bmatrix}
  \dot{m}_{k} \\
  \dot{C}_{k} \\
  \dot{w}_{k} 
  \end{bmatrix}
  =-({\rm FIM}(a))^{-1}
  \begin{bmatrix}
  w_k\int \N_k(\theta) \Bigl( \nabla_{\theta} \log\rho_a^{\rm GM}  +\nabla_{\theta} \Phi_R \Bigr)  \dd\theta
  \\
  \frac{w_k}{2}\int \N_k(\theta) \Bigl(\nabla_{\theta}\nabla_{\theta}\log \rho_a^{\rm GM}  + \nabla_{\theta}\nabla_{\theta}\Phi_R\Bigr) \dd\theta
  \\
  \int \N_k(\theta)
  \bigl(
  \log \rho_a^{\rm GM} + \Phi_R 
  \bigr) \dd\theta  + \lambda
  \end{bmatrix}.
\end{align}
Here $\lambda$ is determined by the constraint $\sum \dot{w}_{k} = 1$. Its value depends on ${\rm FIM}(a)$.

The Fisher information matrix ${\rm FIM}(a)$ for Gaussian mixtures does not have a closed-form expression, and its inversion is computationally expensive.
To improve efficiency, diagonal approximations of the Fisher information matrix have been used in the literature~\cite[Appendix C.8]{chen2024efficient}\cite{lin2019fast}, leading to the following approximation:
\begin{equation} 
\label{eq:FI-app}
{\rm FIM}(a) \approx \textrm{Diag}\left(w_1 C_1^{-1}, ..., w_K C_K^{-1}, w_1X_1, ..., w_KX_K, \frac{1}{w_1}, ..., \frac{1}{w_K}\right).
\end{equation}
Here each $X_k$ is a 4-th order tensor, and its action on any matrix $Y \in \R^{N_{\theta}\times N_{\theta}}$ is given by 
\begin{equation}
\label{eq:FI-X}
X_k Y =     \frac{1}{4}C_k^{-1} (Y + Y^T) C_k^{-1}.
\end{equation}
We write down the derivation of Fisher information matrix approximation here.  We can get \cref{eq:FI-app}
by only keeping the diagonal blocks of ${\rm FIM}(a)$ and approximating the diagonals under the
assumptions that different Gaussian components are well separated. More precisely, for the weights items,
\begin{align*}
        \int \frac{\nabla_{w_i}\rho_a^{\rm GM}  \nabla_{w_j}\rho_a^{\rm GM}}{\rho_a^{\rm GM}}\dd \theta=\int \frac{\N_i\N_j}{\rho_a^{\rm GM}}\dd \theta\approx \delta_{i,j}\int \frac{\N_i^2}{w_i\N_i}=\delta_{i,j}\frac{1}{w_i}.
\end{align*}
Here we substitute $\rho_a^{\rm GM}$ by $w_i \N _i$ during its integration with $\N_i$ . We note that we will
keep using this approximation multiple times in the following derivations.
For the means items, 
    \begin{align*}
        \int \frac{(\nabla_{m_i}\rho_a^{\rm GM})  (\nabla_{m_j}\rho_a^{\rm GM})^\top}{\rho_a^{\rm GM}} \dd \theta
        &=\int \frac{w_iw_j\N_i\N_j C_i^{-1}(\theta-m_i)(\theta-m_j)^\top C_j^{-1}}{\rho_a^{\rm GM}}\dd \theta \\
        &\approx \delta_{i,j}\int\frac{ w_i^2\N_i^2C_i^{-1}(\theta-m_i)(\theta-m_i)^\top C_i^{-1}}{w_i\N_i}\\
        &=\delta_{i,j}w_iC_i^{-1}.
    \end{align*}
For the covariances items, 
    \begin{align*}
        &\int \frac{(\nabla_{C_i}\rho_a^{\rm GM}) \otimes (\nabla_{C_j}\rho_a^{\rm GM})}{\rho_a^{\rm GM}} \dd \theta  \\
        = & \int \frac{w_iw_j\N_i\N_j \big(C_i^{-1}(\theta-m_i)(\theta-m_i)^\top C_i^{-1}-C_i^{-1}\bigr)\otimes \big(C_j^{-1}(\theta-m_j)(\theta-m_j)^\top C_j^{-1}-C_j^{-1}\bigr)}{4 \rho_a^{\rm GM}} \dd \theta  \\
        \approx &\delta_{i,j}\int w_i^2\N_i^2 \frac{\big(C_i^{-1}(\theta-m_i)(\theta-m_i)^\top C_i^{-1}-C_i^{-1}\bigr)\otimes \big(C_i^{-1}(\theta-m_i)(\theta-m_i)^\top C_i^{-1}-C_i^{-1}\bigr)}{4 w_i\N_i}  \dd \theta\\
        = & \delta_{i,j}w_i X_i,
    \end{align*}
where
    \begin{align*}
        X_i = \frac{1}{4}\int \big(C_i^{-1}(\theta-m_i)(\theta-m_i)^\top C_i^{-1}-C_i^{-1}\bigr)\otimes \big(C_i^{-1}(\theta-m_i)(\theta-m_i)^\top C_i^{-1}-C_i^{-1}\bigr) \N_i(\theta)\dd \theta
    \end{align*}
    is a 4-order tensor. Similar with the derivations of \cref{eq:Gaussian-FIM} , we have that  the act of $X_i$ on $\forall Y\in \R^{N_\theta\times N_\theta}$ is given by 
    $ X_iY = \frac{1}{4}C_i^{-1}(Y+Y^\top)C_i^{-1} $. 

Substituting the approximated Fisher information matrix \cref{eq:FI-app} into the natural gradient flow \cref{eq:FR-GM} leads to the following equations:
\begin{equation}
\begin{split} 
\label{eq:NGFlow-appendix}
        \dot{m}_{k} 
        &= -C_k\int \N_k(\theta) \Bigl( \nabla_{\theta} \log\rho_{a_t}^{\rm GM}  +  \nabla_{\theta} \Phi_R \Bigr)  \dd\theta,
        \\
        \dot{C}_{k} 
        &= -C_k\Bigl(\int \N_k(\theta)\bigl(\nabla_{\theta}\nabla_{\theta}\log \rho_{a_t}^{\rm GM}  + \nabla_{\theta}\nabla_{\theta}\Phi_R\bigr) \dd\theta\Bigr) C_k,
        \\
        \dot{w}_{k} &= -w_k\int \Bigl(\N_k(\theta) -  \rho_{a_t}^{\rm GM}\Bigr)\bigl(\log \rho_{a_t}^{\rm GM}  + \Phi_R \bigr) \dd\theta. 
\end{split}
\end{equation}
Here $\lambda = -\int \rho_a^{\rm GM}  
  \bigl(
  \log \rho_{a_t}^{\rm GM} + \Phi_R 
  \bigr) \dd\theta$.
Similar to \cref{eq:dC_t^-1}, the covariance evolution equation can be rewritten as:
\begin{equation} 
\begin{aligned}
\frac{\dd C_{k}^{-1}}{\dd t}
        &=   \int \N_k(\theta) \bigl(\nabla_{\theta}\nabla_{\theta}\log \rho_{a_t}^{\rm GM}  + \nabla_{\theta}\nabla_{\theta}\Phi_R\bigr) \dd\theta.
\end{aligned}
\end{equation}
And the weight evolution equation can be reformulated as
\begin{equation}
\begin{aligned}
\frac{\dd \log w_{k}}{\dd t} &= -\int \Bigl(\N_k(\theta) -  \rho_{a_t}^{\rm GM}\Bigr)\bigl(\log \rho_{a_t}^{\rm GM}  + \Phi_R \bigr) \dd\theta
\end{aligned}
\end{equation}
to ensure the weights remain positive.
In the following, we also refer to \cref{eq:NGFlow-appendix}  as the natural gradient flow, although involving an approximation of the Fisher information matrix.
It is worth noting that when the mode number is $K = 1$, the natural gradient flow with Gaussian mixture approximation~\cref{eq:NGFlow-appendix} reduces to the natural gradient flow with Gaussian approximation~\cref{eq:Gaussian Fisher-Rao}.

Designing effective schemes to discretize~\cref{eq:NGFlow-appendix} remains challenging, particularly due to issues such as the computation of the Hessian matrix, the collapse of different modes, and the singularity of the covariance. A stable, derivative free approximation of~\cref{eq:NGFlow-appendix} is the main focus of the present work.

\end{document}